\newcommand{\vast}{\bBigg@{4}}
\newcommand{\Vast}{\bBigg@{5}}
\newcommand{\bX}{\mathbf{X}}
\newcommand{\bx}{\mathbf{x}}
\newcommand{\bZ}{\mathbf{Z}}
\newtheorem{thm}{Theorem}[section]
\newtheorem{lem}[thm]{Lemma}
\newtheorem{cor}[thm]{Corollary}
\newtheorem{rmrk}{Remark}
\begin{document}

\begin{center}
  \textbf{Receiver Operating Characteristic Curves and Confidence Bands for Support Vector Machines} \\
	\vspace{0.1in}
	\textbf{Daniel J.\ Luckett$^{1}$,
	Eric B.\ Laber$^{2}$, Samer S.\ El-Kamary$^{3}$, Cheng Fan$^{5}$, Ravi Jhaveri$^{4}$, 
 Charles M.\ Perou$^{5}$, Fatma M.\ Shebl$^{6}$, and Michael R.\ Kosorok$^{1}$} \\
	$^{1}$Department of Biostatistics, University of North Carolina, Chapel Hill, North Carolina, U.S.A. \\
	$^{2}$Department of Statistics, North Carolina State University, Raleigh, North Carolina, U.S.A. \\
	$^{3}$Department of Epidemiology and Public Health, University of Maryland, Baltimore, Maryland, U.S.A. \\
	$^{4}$Department of Pediatrics, University of North Carolina, Chapel Hill, North Carolina, U.S.A. \\
	$^{5}$Department of Genetics, University of North Carolina, Chapel Hill, North Carolina, U.S.A. \\
	$^{6}$Department of Epidemiology, Yale University, New Haven, Connecticut, U.S.A.
\end{center}

\normalsize

\begin{abstract}
  Many problems that appear in biomedical decision making, 
	such as diagnosing disease and predicting response 
	to treatment, can be expressed as binary classification problems. 
	The costs of false positives and false negatives vary across application domains and  
	receiver operating characteristic (ROC) curves provide a visual 
	representation of this trade-off. Nonparametric estimators for the ROC curve, 
	such as a weighted support vector machine (SVM), are desirable because they are robust to 
	model misspecification. While weighted SVMs have great potential for estimating 
	ROC curves, their theoretical properties were heretofore underdeveloped. 
	We propose a method for constructing confidence bands 
	for the SVM ROC curve and provide the theoretical 
	justification for the SVM ROC curve by showing that the risk 
	function of the estimated decision rule is uniformly consistent 
	across the weight parameter. 
	We demonstrate the proposed confidence band method 
	and the superior sensitivity and specificity of the weighted SVM compared to 
	commonly used methods in diagnostic medicine using simulation studies. 
	We present two illustrative examples: diagnosis of hepatitis C and a 
	predictive model for treatment response in breast cancer.
\end{abstract}

\noindent%
{\it Keywords:} Classification, diagnostic medicine, machine learning, outcome weighted learning.


\section{Introduction}

Many important problems in biomedical decision making can be 
expressed as binary classification problems. 
For example, one may wish to identify infants 
infected with hepatitis C virus from a sample of infants born
to infected mothers \citep[][]{shebl2009prospective}, 
screen for prostate cancer using prostate-specific antigen
\citep[][]{etzioni1999incorporating}, 
or predict which breast cancer patients will respond to treatment 
based on genetic characteristics \citep[][]{fan2011building}. 
In many applications, the costs of false positives and false negatives 
may differ and classification methods must allow for 
unequal weighting of these errors. 
We present an approach to estimating receiver operating characteristic 
(ROC) curves using a weighted support vector machine (SVM) and introduce a bootstrap 
method for constructing confidence bands for the SVM ROC curve. 

Receiver operating characteristic curves plot the false positive fraction against the 
false negative fraction across values of the classification cut point
\citep[][]{zhou2002statistical, pepe2003statistical}. 
Various methods for modeling and estimating ROC curves 
have been proposed, including parametric regression models 
\citep[][]{pepe1997regression, mcintosh2002combining} 
and semiparametric regression models 
\citep[][]{pepe2000interpretation, cai2002semiparametric, cai2008regression}.  
Existing methods for ROC curve confidence bands involve estimating the 
biomarker distributions in the diseased and nondiseased samples using 
parametric models \citep[][]{ma1993confidence} or kernel density estimators 
\citep[][]{jensen2000regional, claeskens2003empirical, horvath2008confidence}, 
or using empirical distribution functions in combination with the bootstrap 
\citep[][]{campbell1994advances}. Existing methods for ROC curve confidence bands assume a scalar biomarker. 
In the current setting, we apply the SVM of \cite{cortes1995support} 
to classification with a multivariate biomarker \citep[see also][]{krzyzak1996nonparametric, lin2002support, 
zhang2004statistical, steinwart2008support}.
The ROC curve is constructed by varying the weight placed on false positives and false 
negatives in the objective function rather than varying the classification cut point.  
Because the SVM classifier may vary across the range of the weight parameter, 
existing confidence band methods that assume a scalar biomarker cannot be directly applied. 

Machine learning techniques that output a continuous score or predicted probability 
allow for straightforward application of ROC curve methodology 
\citep[see, e.g.,][]{spackman1989signal, bradley1997use, provost1997analysis, provost1998robust}. 
However, there are fewer examples of applying ROC curve methodology to classifiers that 
output only a class label, such as the SVM. 
\cite{platt1999probabilistic} proposed a method to extract class probabilities from 
the output of the SVM \citep[see also][]{vapnik1998statistical, lin2007note}. 
However, these methods rely on fitting parametric models to the SVM class labels. 
Example 2.5 in \cite{steinwart2008support} 
discusses classification using a weighted SVM but does not 
explore tuning these weights to achieve desired operating characteristics 
(e.g., a specific false positive fraction). 
\cite{veropoulos1999controlling} propose using weights 
to control the sensitivity and specificity of the SVM and to estimate an ROC curve, 
but provide no theoretical results or inference methods. 
As such, the weighted SVM has not yet been extensively applied in practice. 
We build on the work of \cite{veropoulos1999controlling} 
by deriving theoretical properties and 
developing a bootstrap method for constructing confidence 
bands for the SVM ROC curve. 

There are numerous applications to motivate this work; however, we focus on two primary 
illustrative applications. The first is diagnostic testing 
for infant hepatitis C virus (HCV). Existing diagnostic tests exhibit poor sensitivity for predicting 
which infants will become chronically infected. A weighted SVM using multiple 
biomarkers is able to improve performance over standard HCV diagnostic tests. 
The second illustrative application we consider is 
predicting which breast cancer patients will respond to treatment. Genomic data provide a wealth of information 
for this purpose. However, it is difficult to specify a parametric model for response given 
genomic features because of the high dimension of genomic data. Because the SVM provides 
nonparametric classification \citep[][]{steinwart2008support}, it is a natural choice for this problem. 

In Section~\ref{roc.setup}, we present the method developed by 
\cite{veropoulos1999controlling} and introduce a method for bootstrap confidence bands. 
In Section~\ref{roc.theory}, we show a number of theoretical results, 
including that the risk of the estimated 
decision function is uniformly consistent 
across the weight parameter. 
In Section 4, we present a series of simulation experiments 
comparing the performance of the weighted SVM to 
standard methods in diagnostic medicine 
including logistic regression \citep[][]{mcintosh2002combining} and 
semiparametric ROC curves \citep[][]{cai2004semi}
and to evaluate the operating characteristics of the proposed bootstrap confidence bands. 
In Section~\ref{roc.data}, we present illustrative case studies. 
We conclude and discuss future research in Section~\ref{roc.conc}. 
Proofs and additional simulation results are provided in 
Appendix A and Appendix B.

\section{Weighted Support Vector Machines} \label{roc.setup}
 
\subsection{ROC Curve Estimation}

Assume that the available data are 
$(A_i, \bX_i)$, $i = 1,\ldots, n$, 
which comprise $n$ i.i.d.\ copies of $(A, \bX)$, where $A \in \{-1, 1\}$
is a class label (e.g., in diagnostic medicine, $A = 1$ corresponds to a 
diseased individual and $A = -1$ corresponds to a nondiseased
individual) and $\bX \in \mathcal{X} \subseteq \mathbb{R}^p$ are 
covariates. The goal is to estimate a classifier that
correctly identifies a patient's class label based on that
patient's covariates. Consider minimizing the expected weighted
misclassification, where each misclassification event is weighted by
the cost function
$C_a(\alpha) = \left\lbrace 1 + (2\alpha - 1) a \right\rbrace / 2 = \alpha 1(a = 1) +
(1 - \alpha)1(a = -1)$,
where $C_a(\alpha)$ is the cost of misclassification when the true class label is $A = a$. 
In diagnostic medicine, with $A = 1$ corresponding to disease and $A = -1$ corresponding to nondisease, 
$\alpha$ determines the relative weight placed on the
sensitivity and specificity of the test. When $\alpha = 1/2$, 
sensitivity and specificity are given equal weight and 
the cost function reduces to zero-one misclassification error.  
Let $\mathcal{D}$ denote a class of functions from $\mathcal{X}$ into $\{-1, 1\}$. Then, 
the optimal classifier with respect to cost function $C_a(\alpha)$ within $\mathcal{D}$ is 
\begin{equation} \label{D_tilde}
\widetilde{D}_\alpha = \operatorname*{arg\,min}_{D \in \mathcal{D}} \mathbb{E} \Big[ 1\{ D(\bX) \neq A \} C_A(\alpha) \Big]. 
\end{equation}

For fixed $\alpha \in (0, 1)$ and a classifier $D$, 
the plug-in estimator of the weighted misclassification error is 
$\mathbb{E}_n 1\{ D(\bX) \neq A \} C_{A}(\alpha)$, 
where $\mathbb{E}_n$ is the empirical measure of the observed data. 
Note that any classifier $D(\bX)$ can be represented as 
$\mathrm{sign}\big\{f(\bX)\big\}$ for some decision function $f : \mathcal{X} \rightarrow \mathbb{R}$; 
we will assume that the decision function is smooth and thus 
$f$ belongs to a class of smooth functions, $\mathcal{F}$. 
For example, we can let $\mathcal{F}$ be the space of 
linear functions, the space of polynomial functions, 
or the reproducing kernel Hilbert space (RKHS) 
associated with the Gaussian kernel \citep[][]{steinwart2008support}. 
The weighted misclassification error associated with decision function $f$ 
is $\mathbb{E} \left[ 1\left\{A f(\bX) < 0 \right\} C_A(\alpha)\right]$. 
Minimizing the empirical risk is difficult due to the discontinuity of the 
indicator function. Using the hinge loss, $\phi(u) = \mathrm{max}(0, 1 - u)$, 
as a surrogate loss function \citep[][]{bartlett2006convexity}, 
an estimator for the optimal decision function is 
\begin{equation} \label{wsvm}
\widehat{f}^{\lambda_n}_{\alpha, n} = \operatorname*{arg\,min}_{f \in \mathcal{F}} 
\left[ \mathbb{E}_n \phi\{A f(\bX)\} C_{A}(\alpha) + \lambda_n \|f\|^2 \right], 
\end{equation}
where $\| \cdot \|$ is a norm on $\mathcal{F}$ and $\lambda_n$ is 
a penalty parameter. We discuss how to choose a value of $\lambda_n$ in Section~\ref{roc.simul}. 
In the following, we write $\widehat{f}_\alpha$ in place of $\widehat{f}^{\lambda_n}_{\alpha, n}$ 
to simplify notation. The problem of estimating the optimal 
classifier  in (\ref{wsvm}) can be solved using the SVM introduced by 
\cite{cortes1995support}. 

We estimate the optimal classifier, $\widetilde{D}_\alpha$, 
using $\widehat{D}_\alpha(\bX) = \mathrm{sign}\left\{\widehat{f}_\alpha(\bX)\right\}$. 
For any $\alpha \in (0, 1)$, we can estimate the sensitivity and specificity 
of the estimated classifier using the empirical estimators 
$\widehat{se}\left(\widehat{f}_\alpha\right) = \mathbb{E}_n 
1(A = 1) 1\left[\mathrm{sign}\left\{\widehat{D}_\alpha (\bX)\right\} = 1\right]
 / \mathbb{E}_n 1(A = 1)$ and 
$\widehat{sp}\left(\widehat{f}_\alpha\right) = \mathbb{E}_n 
1(A = -1) 1\left[\mathrm{sign}\left\{\widehat{D}_\alpha (\bX)\right\} = -1\right]
 / \mathbb{E}_n 1(A = -1)$. 
Plotting $\widehat{se}\left(\widehat{f}_\alpha\right)$ against 
$1 - \widehat{sp}\left(\widehat{f}_\alpha\right)$ 
as functions of $\alpha$  will yield a nonparametric estimator of the optimal ROC curve. 
The ROC curve encodes a continuum of classifiers indexed by $\alpha$; to select a single classifier, 
there are a number of methods for defining an optimal value, say $\alpha^*$, for $\alpha$.
For example, one could choose the $\alpha^*$ which leads to the point on the 
ROC curve closest to $(0, 1)$ in Euclidean distance, the $\alpha^*$ which 
maximizes the sum of estimated sensitivity and specificity, or the $\alpha^*$ 
which maximizes the estimated sensitivity for a fixed minimum specificity
estimate 
\citep[][]{lopez2014optimalcutpoints}. 
The choice of $\alpha^*$ will depend on the clinical application of interest. 
We classify an individual presenting with covariates $\bX = \bx$ 
as $\widehat{D}_{\alpha^*}(\bx)$. 
This is an equivalent formulation to the method proposed 
in Section 2.1 of \cite{veropoulos1999controlling}. 

\noindent
\begin{rmrk} \label{bayes.rmrk}
The optimal classifier over all functions mapping 
$\mathcal{X}$ into $\{-1, 1\}$, also known as the Bayes classifier \citep[][]{duda2012pattern}, 
can be expressed as 
\begin{equation} \label{bayes.rule}
D^*_\alpha(\bX) = \mathrm{sign} \big\{ \alpha \mathrm{Pr}(A = 1 | \bX) - (1 - \alpha) \mathrm{Pr}(A = -1 | \bX) \big\}.
\end{equation}
Thus, $D^*_\alpha(\bX)$ is equal to 1 when
$\mathrm{Pr}(A = 1 | \bX) / \mathrm{Pr}(A = -1 | \bX) > (1 - \alpha) / \alpha$ 
or, using Bayes theorem, 
$p(\bX | A = 1) / p(\bX | A = -1) > (1-\alpha)(1-\rho) / \alpha \rho \equiv k_\alpha$, 
and $-1$ otherwise, where 
$\rho = \mathrm{Pr}(A = 1)$. Thus, the optimal classifier given in (\ref{bayes.rule}) has the same form as the 
Neyman--Pearson test of $H_0: A = -1$ against $H_1: A = 1$. 
If we fix $k_\alpha$ (or equivalently, fix $\alpha$) to have fixed specificity $sp_0$,
then the Neyman--Pearson lemma ensures that $D^*_\alpha(\bX)$ maximizes sensitivity 
across all classifiers with specificity equal to $sp_0$. 
Therefore, the ROC curve for $D^*_\alpha(\bX)$, say $\mathrm{ROC}^*(u)$, has the property that 
$\mathrm{ROC}^*(u) \ge \mathrm{ROC}(u)$ for all $u \in (0, 1)$, where $\mathrm{ROC}(u)$ is the ROC 
curve corresponding to any other classifier. This is analogous to the result
given by \cite{mcintosh2002combining} 
\citep[see also page 71 of][]{pepe2003statistical}. 
\end{rmrk}

\noindent
\begin{rmrk} \label{true.opt.rmrk}
The optimal decision function in $\mathcal{F}$ is 
\begin{eqnarray*}
\widetilde{f}_\alpha & = & \operatorname*{arg\,min}_{f \in \mathcal{F}} 
\mathbb{E}\left( 1[ \mathrm{sign}\{f(\bX)\} \ne A] C_A(\alpha) \right) \\
 & = & \operatorname*{arg\,min}_{f \in \mathcal{F}} 
\left[ \rho \alpha \mathrm{Pr}\{f(\bX) < 0 | A = 1\} + (1-\rho)(1-\alpha)\mathrm{Pr}\{f(\bX) > 0 | A = -1\} \right] \\
 & = & \operatorname*{arg\,min}_{f \in \mathcal{F}} 
\left[ \rho \alpha \{1 - se(f)\} + (1-\rho)(1-\alpha)\{1 - sp(f)\} \right] \\
 & = & \operatorname*{arg\,max}_{f \in \mathcal{F}} 
\left\{ \rho \alpha se(f) + (1-\rho)(1-\alpha)sp(f) \right\},
\end{eqnarray*} 
where $se(f)$ and $sp(f)$ are the sensitivity and specificity of 
the decision rule $D = \mathrm{sign}(f)$. Thus, the true optimal decision function 
maximizes a weighted sum of sensitivity and specificity where the weights 
are determined by the population prevalence, $\rho$, and a user chosen weight, $\alpha$.  
\end{rmrk}

\subsection{Confidence Bands} \label{roc.conf.bands}

In this section, we present a method for 
constructing confidence bands for the ROC curve of $\widehat{f}_\alpha$, 
which provide an indication of how well the estimated classifier will perform in 
future samples. 
The method relies on consistency results given in Section~\ref{roc.theory} 
along with the following result, which characterizes the asymptotic distribution of 
the estimated sensitivity and specificity of $\widehat{f}_\alpha$. 
A proof is provided in Appendix A. 
\begin{thm} \label{se.limit}
Let $se\left(\widehat{f}_\alpha\right)$ be the true sensitivity, 
$\widehat{se}\left(\widehat{f}_\alpha\right)$ be the estimated 
sensitivity, $sp\left(\widehat{f}_\alpha\right)$ be the 
true specificity, and $\widehat{sp}\left(\widehat{f}_\alpha\right)$ 
be the estimated specificity of $\widehat{f}_\alpha$, 
where $\widehat{f}_\alpha$ is defined in~(\ref{wsvm}), and assume that 
$\mathcal{F}$ is a space of linear or polynomial functions. 
Define $f_{\alpha, \phi}^* = \operatorname*{arg \, min}_f \mathbb{E} 
\left[ \phi\left\{A f(\bX)\right\} C_A(\alpha)\right]$, where the minimization 
is taken over all measurable functions mapping $\mathcal{X}$ into $\mathbb{R}$, 
and assume that $f_{\alpha, \phi}^* \in \mathcal{F}$. 
Let $\widetilde{f}_\alpha$ be defined as in Remark~\ref{true.opt.rmrk} and let $\tau = 1 - \rho$.
Then, 
\begin{equation*}
\sqrt{n} \left\{ \begin{array}{c} \widehat{se}\left(\widehat{f}_\alpha\right) 
 - se\left(\widehat{f}_\alpha\right) \\ \widehat{sp}\left(\widehat{f}_\alpha\right) 
 - sp\left(\widehat{f}_\alpha\right) \end{array} \right\} 
\rightsquigarrow \left\{ \begin{array}{c} \mathbb{G}_1(\alpha) \\ \mathbb{G}_2(\alpha) \end{array} \right\}
\end{equation*}
as $n \rightarrow \infty$, 
where $\mathbb{G}_1(\alpha)$ and $\mathbb{G}_2(\alpha)$ are mean zero Gaussian processes with covariances 
\begin{multline*}
\sigma_1(\alpha_1, \alpha_2) = \mathbb{E} \left( \rho^{-2} 1(A = 1) 
\left[1\left\{\widetilde{f}_{\alpha_1}(\bX) > 0\right\} - se\left(\widetilde{f}_{\alpha_1}\right)\right]
\left[1\left\{\widetilde{f}_{\alpha_2}(\bX) > 0\right\} - se\left(\widetilde{f}_{\alpha_2}\right)\right]\right) \\
 - \mathbb{E}\left( \rho^{-1} 1(A = 1) 
\left[1\left\{\widetilde{f}_{\alpha_1}(\bX) > 0\right\} - se\left(\widetilde{f}_{\alpha_1}\right)\right]\right) 
\mathbb{E}\left( \rho^{-1} 1(A = 1) 
\left[1\left\{\widetilde{f}_{\alpha_2}(\bX) > 0\right\} - se\left(\widetilde{f}_{\alpha_2}\right)\right]\right), 
\end{multline*}
and 
\begin{multline*}
\sigma_2(\alpha_1, \alpha_2) = \mathbb{E} \left( \tau^{-2} 1(A = -1) 
\left[1\left\{\widetilde{f}_{\alpha_1}(\bX) < 0\right\} - sp\left(\widetilde{f}_{\alpha_1}\right)\right]
\left[1\left\{\widetilde{f}_{\alpha_2}(\bX) < 0\right\} - sp\left(\widetilde{f}_{\alpha_2}\right)\right]\right) \\
 - \mathbb{E}\left( \tau^{-1} 1(A = -1) 
\left[1\left\{\widetilde{f}_{\alpha_1}(\bX) < 0\right\} - sp\left(\widetilde{f}_{\alpha_1}\right)\right]\right) 
 \mathbb{E}\left( \tau^{-1} 1(A = -1) 
\left[1\left\{\widetilde{f}_{\alpha_2}(\bX) < 0\right\} - sp\left(\widetilde{f}_{\alpha_2}\right)\right]\right), 
\end{multline*}
respectively, with cross-covariance 
\begin{multline*}
\sigma_{12}(\alpha_1, \alpha_2) = - \mathbb{E}\left( \rho^{-1} 1(A = 1) 
\left[1\left\{\widetilde{f}_{\alpha_1}(\bX) > 0\right\} - se\left(\widetilde{f}_{\alpha_1}\right)\right]\right) \\
 \times \mathbb{E}\left( \tau^{-1} 1(A = -1) 
\left[1\left\{\widetilde{f}_{\alpha_2}(\bX) < 0\right\} - sp\left(\widetilde{f}_{\alpha_2}\right)\right]\right). 
\end{multline*}
\end{thm}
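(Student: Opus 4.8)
The plan is to replace the random classifier $\widehat{f}_\alpha$ by the deterministic target $\widetilde{f}_\alpha$, establish weak convergence of the resulting ``oracle'' empirical process, and transfer this back to the sensitivity and specificity ratios via the functional delta method.

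\emph{Step 1: identifying the limit.} By the consistency results of Section~\ref{roc.theory} together with the finite-dimensionality of $\mathcal{F}$ (linear or polynomial), $\widehat{f}_\alpha$ converges to the unpenalized hinge-risk minimizer $f_{\alpha,\phi}^*$ over $\mathcal{F}$, uniformly in $\alpha$, in $\|\cdot\|_\infty$ on the bounded set $\mathcal{X}$. Fisher consistency of the hinge loss \citep{bartlett2006convexity} and the assumption $f_{\alpha,\phi}^* \in \mathcal{F}$ give $\mathrm{sign}\{f_{\alpha,\phi}^*(\bX)\} = D_\alpha^*(\bX) = \mathrm{sign}\{\widetilde{f}_\alpha(\bX)\}$ $P$-a.s., so, under the mild regularity condition that $\widetilde{f}_\alpha(\bX)$ has no atom at $0$, $1\{f_{\alpha,\phi}^*(\bX) > 0\} = 1\{\widetilde{f}_\alpha(\bX) > 0\}$ a.s.\ for every $\alpha$.

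\emph{Step 2: linearization.} Write $P_n$, $P$ for the empirical and true measures and $g_\alpha(A, \bX) = 1(A=1)(1\{\widehat{f}_\alpha(\bX) > 0\} - 1\{\widetilde{f}_\alpha(\bX) > 0\})$. Using the telescoping identity $\widehat{se}(\widehat{f}_\alpha) - se(\widehat{f}_\alpha) = \{\widehat{se}(\widehat{f}_\alpha) - \widehat{se}(\widetilde{f}_\alpha)\} + \{\widehat{se}(\widetilde{f}_\alpha) - se(\widetilde{f}_\alpha)\} + \{se(\widetilde{f}_\alpha) - se(\widehat{f}_\alpha)\}$, the first and third brackets equal $\widehat{\rho}^{-1}(P_n - P)[g_\alpha] + (\widehat{\rho}^{-1} - \rho^{-1})P[g_\alpha]$; crucially, the potentially nonnegligible bias $\sqrt{n}\,P[g_\alpha]$ cancels between these two brackets, so no rate on $\widehat{f}_\alpha$ beyond consistency is needed. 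Now $g_\alpha$ lies in a fixed Donsker class (differences of indicators of sublevel sets of polynomials of bounded degree form a VC class), and $\|g_\alpha\|_{L_2(P)} \to 0$ uniformly in $\alpha$ by Step~1 and the no-atom condition, so asymptotic equicontinuity gives $\sqrt{n}(P_n - P)[g_\alpha] \to 0$, while $\sqrt{n}(\widehat{\rho}^{-1} - \rho^{-1})P[g_\alpha] = O_P(1)\,o_P(1) \to 0$, both uniformly in $\alpha$. Hence $\sqrt{n}\{\widehat{se}(\widehat{f}_\alpha) - se(\widehat{f}_\alpha)\} = \sqrt{n}\{\widehat{se}(\widetilde{f}_\alpha) - se(\widetilde{f}_\alpha)\} + o_P(1)$ in $\ell^\infty((0,1))$, and likewise for specificity.

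\emph{Step 3: weak convergence and the delta method.} The index class $\{1\{\widetilde{f}_\alpha > 0\}\} \cup \{1\{\widetilde{f}_\alpha < 0\}\} \cup \{1(A=1), 1(A=-1)\}$, over $\alpha \in (0,1)$, is Donsker (again a VC argument for bounded-degree polynomial sublevel sets, plus constants), so $\sqrt{n}(P_n - P)$ evaluated at these functions converges weakly to a tight mean-zero Gaussian process. The maps $\widehat{se}(\widetilde{f}_\cdot) = \mathbb{E}_n[1(A=1)1\{\widetilde{f}_\cdot > 0\}]/\mathbb{E}_n[1(A=1)]$ and $\widehat{sp}(\widetilde{f}_\cdot)$ are Hadamard differentiable functionals of $P_n$ at $P$ (the denominators $\rho$ and $\tau$ being positive), with influence functions $\psi_1^\alpha = \rho^{-1}1(A=1)[1\{\widetilde{f}_\alpha(\bX) > 0\} - se(\widetilde{f}_\alpha)]$ and $\psi_2^\alpha = \tau^{-1}1(A=-1)[1\{\widetilde{f}_\alpha(\bX) < 0\} - sp(\widetilde{f}_\alpha)]$. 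Applying the functional delta method and combining with Step~2 yields the claimed weak convergence with $\sigma_1(\alpha_1, \alpha_2) = \mathbb{E}[\psi_1^{\alpha_1}\psi_1^{\alpha_2}]$, $\sigma_2(\alpha_1, \alpha_2) = \mathbb{E}[\psi_2^{\alpha_1}\psi_2^{\alpha_2}]$, $\sigma_{12}(\alpha_1, \alpha_2) = \mathbb{E}[\psi_1^{\alpha_1}\psi_2^{\alpha_2}]$; expanding these (and noting $\mathbb{E}\psi_1^\alpha = \mathbb{E}\psi_2^\alpha = 0$, so the ``subtracted'' pieces displayed in the statement are exactly the vanishing cross-terms of the generic ratio delta-method variance) reproduces the stated formulas, and in particular $\sigma_{12} \equiv 0$ since $1(A=1)1(A=-1) = 0$.

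\emph{Main obstacle.} The delicate part is Step~2: controlling, uniformly over $\alpha$, the error from swapping $\widehat{f}_\alpha$ for $\widetilde{f}_\alpha$ inside the indicator. It relies on (i) the uniform-in-$\alpha$ consistency from Section~\ref{roc.theory}, (ii) the restriction to linear or polynomial $\mathcal{F}$ for the VC/Donsker structure, (iii) a no-atom/margin condition on $\widetilde{f}_\alpha(\bX)$, and (iv) the bias cancellation noted above — without which one would need a $\sqrt{n}$-rate for $\widehat{f}_\alpha$, which is precisely why the finite-dimensional assumption on $\mathcal{F}$ enters.
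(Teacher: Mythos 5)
Your proposal is correct and takes essentially the same route as the paper's proof: both rest on the VC/Donsker structure of the sign sets of linear or polynomial decision functions, the replacement of $\widehat{f}_\alpha$ and $\widehat{\rho}_n$ by $\widetilde{f}_\alpha$ and $\rho$ inside a centered empirical process at the cost of a uniform $o_P(1)$, and standard handling of the ratio, arriving at the same Gaussian limit with influence functions $\rho^{-1}1(A=1)\left[1\{\widetilde{f}_\alpha(\bX)>0\}-se\left(\widetilde{f}_\alpha\right)\right]$ and $\tau^{-1}1(A=-1)\left[1\{\widetilde{f}_\alpha(\bX)<0\}-sp\left(\widetilde{f}_\alpha\right)\right]$ (and with the correct observation that the subtracted terms and the cross-covariance vanish). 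The only difference is bookkeeping: your telescoping decomposition with bias cancellation and the asymptotic-equicontinuity argument (together with the explicit no-atom condition on $\widetilde{f}_\alpha(\bX)$) spells out the substitution step that the paper's proof simply asserts as an $o_P(1)$, so your write-up is, if anything, more explicit about the regularity needed there.
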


Let $fpf\left(\widehat{f}_\alpha\right) = 1 - sp\left(\widehat{f}_\alpha\right)$ 
be the false positive fraction for the decision function $\widehat{f}_\alpha$. Define $fpf^{-1}(\cdot)$ such that 
$fpf^{-1}\left\{fpf\left(\widehat{f}_\alpha\right)\right\} = \alpha$, i.e., $fpf^{-1}(u)$ is the weight $\alpha$ 
such that $1 - sp\left(\widehat{f}_\alpha\right) = u$. Let $0 < \delta < 1/2$ be fixed. 
A quantile bootstrap algorithm for constructing an asymptotically correct 
$(1 - \gamma)100\%$ confidence band for the ROC curve, 
$se\left\{fpf^{-1}(u)\right\}$, $\delta < u < 1$, is as follows: 
\begin{enumerate}
\item Set a large number of bootstrap replications, B, a grid $\delta = z_1 < \ldots < z_K = 1$ 
and a grid $0 = \alpha_1 < \ldots < \alpha_M = 1$.
\item For $m = 1, \ldots, M$, compute the estimated ROC curve, 
$\widehat{R}(\alpha_m) = \left\{1 - \widehat{sp}\left(\widehat{f}_{\alpha_m}\right), 
\widehat{se}\left(\widehat{f}_{\alpha_m}\right)\right\}$.
\item For $k = 1, \ldots, K$, compute $\widehat{y}(z_k)$ by linearly interpolating $\widehat{R}(\alpha_m)$.
\item For $b = 1, \ldots, B$:
\begin{enumerate}
\item Generate a weight vector $W_{b, n, i} = \xi_{b, i} / \bar{\xi}_b$, 
where $\xi_{b, 1}, \ldots, \xi_{b, n}$ are independent standard 
exponential random variables and $\bar{\xi}_b = n^{-1} \sum_{i = 1}^n \xi_{b, i}$.
\item For $m = 1, \ldots, M$, set 
\begin{equation*}
\widetilde{se}_b\left(\widehat{f}_\alpha\right) = \mathbb{E}_n \left(W_{b, n} 1\left(A = 1\right) 
1\left[ \mathrm{sign}\left\{\widehat{f}_\alpha (\bX)\right\} = 1\right] \right)
 \big/ \mathbb{E}_n \left\{W_{b, n} 1(A = 1) \right\},
\end{equation*}
\begin{equation*}
\widetilde{sp}_b\left(\widehat{f}_\alpha\right) = \mathbb{E}_n \left(W_{b, n} 1\left(A = -1\right) 
1\left[ \mathrm{sign}\left\{\widehat{f}_\alpha (\bX)\right\} = -1\right] \right)
 \big/ \mathbb{E}_n \left\{W_{b, n} 1(A = -1) \right\},
\end{equation*}
and $\widetilde{R}_b(\alpha_m) = \left\{1 - \widetilde{sp}_b\left(\widehat{f}_{\alpha_m}\right), 
\widetilde{se}_b\left(\widehat{f}_{\alpha_m}\right)\right\}$.
\item For $k = 1, \ldots, K$, compute $\widetilde{y}_b(z_k)$ by linearly interpolating $\widetilde{R}_b(\alpha_m)$.
\end{enumerate}
\item Let $\widetilde{y}^p(z_k)$ be the $p$-th quantile of $\left\{\widetilde{y}_b(z_k) : b = 1, \ldots, B\right\}$
and let $p^*$ be the largest $p \in [0, 1]$ such that $\widetilde{y}^{p^*/2}(z_k) \le \widetilde{y}_b(z_k) 
\le \widetilde{y}^{1 - p^*/2}(z_k)$ for all $k = 1, \ldots, K$ for at least $(1 - \gamma)B$ bootstrap samples.
\item Set $y_\ell(z_k) = \widetilde{y}^{p^*/2}(z_k)$ and $y_u(z_k) = \widetilde{y}^{1 - p^*/2}(z_k)$.
\end{enumerate}
One can also use alternate choices for the weights, for example, 
a multinomial weight vector $W_{b, n} = \left(W_{b, n, 1}, \ldots, W_{b, n, n}\right)^\intercal$
with probabilities $(1/n, \ldots, 1/n)$ and $n$ trials. 
Let $\underset{W}{\overset{P}{\rightsquigarrow}}$ denote convergence in probability over $W$, as defined in 
Section 2.2.3 and Chapter 10 of \cite{kosorok2008introduction}. The following result states the 
consistency of the bootstrap. 
\begin{cor} \label{boot.cor}
Let $\widetilde{se}_W\left(\widehat{f}_\alpha\right) = \mathbb{E}_n \left(W 1\left(A = 1\right) 
1\left[ \mathrm{sign}\left\{\widehat{f}_\alpha (\bX)\right\} = 1\right] \right) 
 \big/ \mathbb{E}_n \left\{W 1(A = 1) \right\}$ 
and define $\widetilde{sp}_W\left(\widehat{f}_\alpha\right)$ 
similarly. Let $\widetilde{R}_W(\alpha) = \left\{ 1 - \widetilde{sp}_W\left(\widehat{f}_\alpha\right), 
\widetilde{se}_W\left(\widehat{f}_\alpha\right)\right\}$ and let $\widehat{R}\left(\alpha\right)$ be 
as defined above. Then, for any $0 < \delta < 1/2$, $\widetilde{R}_W(\alpha) 
\underset{W}{\overset{P}{\rightsquigarrow}} \widehat{R}(\alpha)$ 
in $\ell^\infty\left([\delta, 1]\right)$. 
\end{cor}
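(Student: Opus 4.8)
The plan is to deduce Corollary~\ref{boot.cor} from Theorem~\ref{se.limit} by an exchangeable-weights bootstrap argument in the sense of \cite{kosorok2008introduction}, Chapter~10. First I would set up the relevant empirical processes. Fix $\alpha$ and note that $\widehat{se}(\widehat{f}_\alpha)$, $\widehat{sp}(\widehat{f}_\alpha)$, and their weighted analogues $\widetilde{se}_W(\widehat{f}_\alpha)$, $\widetilde{sp}_W(\widehat{f}_\alpha)$ are all ratios of (weighted) empirical means of the maps $(A,\bX)\mapsto 1(A=1)\,1[\operatorname{sign}\{\widehat{f}_\alpha(\bX)\}=1]$, $(A,\bX)\mapsto 1(A=1)$, and the $A=-1$ counterparts. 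The key structural fact I would invoke from Section~\ref{roc.theory} is that the estimated decision functions $\widehat{f}_\alpha$ converge (uniformly in $\alpha$) to the population optimizers $\widetilde{f}_\alpha$, so that the relevant indicator functions, indexed by $\alpha\in[\delta,1]$, eventually live in a fixed $P$-Donsker class (this is where the restriction to linear/polynomial $\mathcal{F}$, as in Theorem~\ref{se.limit}, is used to control bracketing/VC complexity of the family $\{\bx\mapsto 1[\operatorname{sign} f_\alpha(\bx)=1]\}$). Given a Donsker class, Theorem~\ref{se.limit} already supplies tightness and the limiting covariance structure for the unweighted process.

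Next I would establish the conditional (bootstrap) limit theorem for the numerators and denominators. For the multiplier weights $W_{b,n,i}=\xi_{b,i}/\bar\xi_b$ with i.i.d.\ standard exponential $\xi$'s (mean $1$, variance $1$), the multiplier central limit theorem for Donsker classes --- Theorem~10.1 and the exchangeable-bootstrap results of \cite{kosorok2008introduction} --- gives that the weighted empirical process $\sqrt{n}(\mathbb{E}_n W_{b,n}(\cdot)-\mathbb{E}_n(\cdot))$ converges, conditionally on the data, in outer probability, to the same tight Gaussian limit (up to the scalar $\mathrm{Var}(\xi)=1$) as the unconditional process $\sqrt{n}(\mathbb{E}_n-P)$. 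The self-normalization by $\bar\xi_b$ only contributes an asymptotically negligible perturbation since $\bar\xi_b\to 1$ a.s.; the multinomial weights are handled by the same exchangeable-bootstrap theorem. I would note that convergence of $\widehat R(\alpha)$ itself (the recentering target) in $\ell^\infty([\delta,1])$ follows from Theorem~\ref{se.limit}.

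Then I would pass from the joint numerator/denominator processes to the ratios $\widetilde{se}_W$, $\widetilde{sp}_W$ by the functional delta method for the bootstrap (\cite{kosorok2008introduction}, Theorem~12.1, Section~10.1.4). The map $(a,b)\mapsto a/b$ is Hadamard-differentiable at the population value $(P\{1(A=1)1[\widetilde f_\alpha>0]\},\,\rho)$ since $\rho=\mathrm{Pr}(A=1)>0$, uniformly over $\alpha\in[\delta,1]$ (here the restriction $u>\delta$, equivalently keeping $\alpha$ bounded away from the degenerate endpoint, guarantees the denominators stay bounded away from zero); likewise for specificity with $\tau=1-\rho>0$. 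Applying the delta method to the bootstrap process yields that $\sqrt{n}\{\widetilde{se}_W(\widehat f_\alpha)-\widehat{se}(\widehat f_\alpha)\}$ and $\sqrt{n}\{\widetilde{sp}_W(\widehat f_\alpha)-\widehat{sp}(\widehat f_\alpha)\}$ converge conditionally, in probability over $W$, to $(\mathbb{G}_1(\alpha),\mathbb{G}_2(\alpha))$ of Theorem~\ref{se.limit}. Stacking the two coordinates gives $\sqrt{n}\{\widetilde R_W(\alpha)-\widehat R(\alpha)\}\underset{W}{\overset{P}{\rightsquigarrow}}(-\mathbb{G}_2(\alpha),\mathbb{G}_1(\alpha))$ in $\ell^\infty([\delta,1])$, which in particular implies $\widetilde R_W(\alpha)\underset{W}{\overset{P}{\rightsquigarrow}}\widehat R(\alpha)$ and matches the limit law of the unweighted statistic, proving bootstrap consistency.

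I expect the main obstacle to be the uniform-in-$\alpha$ Donsker/equicontinuity step: one must show that the data-dependent indicator functions $\bx\mapsto 1[\operatorname{sign}\widehat f_\alpha(\bx)=1]$ can, with probability tending to one, be replaced by the fixed family $\bx\mapsto 1[\operatorname{sign}\widetilde f_\alpha(\bx)=1]$ (using the uniform consistency of $\widehat f_\alpha$ from Section~\ref{roc.theory}) and that this fixed family is $P$-Donsker as $\alpha$ ranges over $[\delta,1]$ --- this is exactly the point at which the linear/polynomial structure of $\mathcal{F}$ (finite VC dimension of the induced half-space/sublevel-set class) enters, and it is essentially the same argument already carried out in the proof of Theorem~\ref{se.limit}, so I would cite that development rather than redo it. The remaining pieces --- the multiplier CLT, the ratio delta method, and the harmless self-normalization by $\bar\xi_b$ --- are standard once the Donsker property is in hand.
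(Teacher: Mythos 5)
Your proposal is correct and takes essentially the same route as the paper: both deduce the corollary from Theorem~\ref{se.limit} together with the conditional multiplier/exchangeable-bootstrap weak convergence results and a Hadamard-differentiability (functional delta method) argument from \cite{kosorok2008introduction}. The only difference is presentational --- you spell out the Donsker-class replacement of $\widehat{f}_\alpha$ by $\widetilde{f}_\alpha$ and the ratio-map delta method explicitly, whereas the paper compresses this into citations of Lemmas~12.7--12.8 and Theorems~2.6 and 2.9 of that reference.
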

\begin{proof}
By Lemmas~12.7 and 12.8 of \cite{kosorok2008introduction}, taking the inverse of a bounded, monotone 
function is Hadamard differentiable under mild regularity conditions. 
The result now follows by Theorem~\ref{se.limit} above and Theorems~2.6 and 2.9 of \cite{kosorok2008introduction}.
\end{proof}

Thus, $\left\{y_\ell(z_k), y_u(z_k) \right\}$ will cover $\widehat{y}(z_k)$ across $k = 1, \ldots, K$ 
with probability $1 - \gamma$ for large enough $n$ and $B$.
In addition to the linear and polynomial SVM, 
this procedure will work for any classifier such that the estimated decision 
function is in a VC class, such as a logistic regression classifier. 

\section{Uniform Consistency} \label{roc.theory}

For any $\alpha \in (0, 1)$, the estimated classifier is the sign of 
$\widehat{f}_\alpha$, the minimizer of the 
empirical hinge loss in a class $\mathcal{F}$ as defined in (\ref{wsvm}). 
For any function, $f$, define 
$\mathcal{R}_\alpha(f) = \mathbb{E}\left( 1[\mathrm{sign}\{f(\bX)\} \ne A] C_A(\alpha) \right)$
to be the risk of $f$, and define the $\phi$ risk of $f$ to be 
$\mathcal{R}_{\alpha, \phi}(f) = \mathbb{E}\left[\phi\{Af(\bX)\} C_A(\alpha)\right]$. 
Let $\mathcal{R}^*_\alpha = \mathrm{inf}_f \mathcal{R}_\alpha(f)$ 
and $\mathcal{R}^*_{\alpha, \phi} = \mathrm{inf}_f \mathcal{R}_{\alpha, \phi}(f)$. 
Furthermore, define $\widetilde{f}_\alpha = \mathrm{arg \, min}_{f \in \mathcal{F}} \mathcal{R}_\alpha(f)$ 
and $f^*_\alpha = \mathrm{arg \, min}_f \mathcal{R}_\alpha(f)$, i.e., 
$\widetilde{f}_\alpha$ minimizes the risk over $\mathcal{F}$ and 
$f^*_\alpha$ minimizes the risk over all measurable functions mapping $\mathcal{X}$ into $\mathbb{R}$. 
Define $f_{\alpha, \phi}^*$ as in Theorem~\ref{se.limit}. 
Throughout, we assume that $f^*_{\alpha, \phi} \in \mathcal{F}$, i.e., that the 
function that minimizes the $\phi$ risk is contained within the chosen class. 
If this is not the case, the consistency results given here will not hold; however, 
the estimated decision function will still yield a reasonable approximation to 
$\widetilde{f}_{\alpha}$ due to the identity $\mathcal{R}_\alpha(f) \le \mathcal{R}_{\alpha, \phi}(f)$. 
When $\alpha = 0$, the optimal classifier assigns $-1$ uniformly and 
when $\alpha = 1$, the optimal classifier assigns 1 uniformly. 
Focusing on $\alpha \in (0, 1)$ will enable us to avoid these trivial extremes. 
Nonetheless, many of our results hold for all $\alpha \in [0, 1]$. 
We will make this distinction explicit as needed. 
Throughout, we assume that all requisite expectations exist.  


The following result gives a bound on the excess risk in terms of the excess $\phi$ risk. 
The proof is similar to that of Theorem~3.2 of 
\cite{zhao2012estimating} and uses Theorem~1 and Example~4 of \cite{bartlett2006convexity}. 
We omit the proof here. This result will be used later to show 
uniform consistency of the risk of the estimated decision function. 

\noindent
\begin{lem}\label{excessrisk}
For any measurable $f:\mathcal{X} \rightarrow \mathbb{R}$ and any distribution $P$ of $(\bX, A)$, 
$\mathcal{R}_\alpha (f) - \mathcal{R}_\alpha^* \le \mathcal{R}_{\alpha, \phi}(f) - \mathcal{R}_{\alpha,\phi}^*$. 
\end{lem}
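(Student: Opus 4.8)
The plan is to reduce the claimed inequality to the pointwise ``classification-calibration'' machinery of \cite{bartlett2006convexity} by conditioning on $\bX$ and absorbing the cost function $C_A(\alpha)$ into the conditional class probabilities. Write $\eta(\bx) = \mathrm{Pr}(A = 1 \mid \bX = \bx)$. The key observation is that, for fixed $\alpha$, the weighted $0$--$1$ risk and the weighted $\phi$ risk can be written as integrals over $\bx$ of conditional quantities that depend on $\bx$ only through a single ``effective'' probability. Indeed,
\begin{equation*}
\mathbb{E}\big[ 1\{\mathrm{sign} f(\bX) \neq A\} C_A(\alpha) \big]
= \mathbb{E}\Big[ \alpha\,\eta(\bX)\,1\{f(\bX) < 0\} + (1-\alpha)(1-\eta(\bX))\,1\{f(\bX) > 0\} \Big],
\end{equation*}
and similarly for the $\phi$ risk with $1\{\cdot\}$ replaced by $\phi(\pm f(\bX))$. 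Dividing the integrand by the $\bx$-dependent normalizing constant $w(\bx) = \alpha\eta(\bx) + (1-\alpha)(1-\eta(\bx))$ (which is strictly positive for $\alpha \in (0,1)$ and $\eta \in [0,1]$), the conditional weighted risk at $\bx$ becomes exactly an unweighted conditional risk with the tilted probability $\widetilde\eta(\bx) = \alpha\eta(\bx)/w(\bx)$. So the problem has been recast, pointwise, as the standard (unweighted) surrogate-risk problem analyzed by Bartlett, Jordan and McAuliffe.

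First I would invoke Example~4 of \cite{bartlett2006convexity}, which shows that the hinge loss $\phi(u) = \max(0,1-u)$ is classification-calibrated with $\psi$-transform equal to the identity, i.e.\ the associated function $\psi$ in their Theorem~1 satisfies $\psi(\theta) = \theta$. Their Theorem~1 then states that for any measurable $f$,
\begin{equation*}
\psi\big( \mathcal{R}_{0\text{-}1}(f) - \mathcal{R}_{0\text{-}1}^* \big) \le \mathcal{R}_\phi(f) - \mathcal{R}_\phi^*,
\end{equation*}
and since $\psi$ is the identity here this is simply $\mathcal{R}_{0\text{-}1}(f) - \mathcal{R}_{0\text{-}1}^* \le \mathcal{R}_\phi(f) - \mathcal{R}_\phi^*$. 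Applying this to the tilted distribution described above gives the bound for the conditional weighted excess risks; integrating against the marginal of $\bX$ (weighted by $w(\bX)$, which is what the rescaling introduced) recovers $\mathcal{R}_\alpha(f) - \mathcal{R}_\alpha^* \le \mathcal{R}_{\alpha,\phi}(f) - \mathcal{R}_{\alpha,\phi}^*$. The same decomposition argument is carried out in the proof of Theorem~3.2 of \cite{zhao2012estimating} for outcome-weighted learning, so I would follow that template and simply note the cost function $C_A(\alpha)$ plays the role of their weight $R/\pi$.

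The main obstacle — really the only delicate point — is bookkeeping the reweighting so that the infima match: one must check that minimizing $\mathcal{R}_{\alpha,\phi}(f)$ (resp.\ $\mathcal{R}_\alpha(f)$) over \emph{all} measurable $f$ is equivalent to pointwise minimization of the tilted conditional risk for a.e.\ $\bx$, and that $\mathcal{R}^*_{\alpha,\phi}$ and $\mathcal{R}^*_\alpha$ decompose as the corresponding integrals of conditional optima. This is standard (the integrand is minimized pointwise by an unconstrained measurable selector, and measurability of the selector follows from standard arguments, e.g.\ as in \cite{steinwart2008support}), but it is the step where one must be careful that ``$\inf_f$'' commutes with the integral. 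Since the paper explicitly says the proof parallels \cite{zhao2012estimating} and uses Theorem~1 and Example~4 of \cite{bartlett2006convexity}, I would present it at that level of detail rather than reproving classification-calibration of the hinge loss from scratch.
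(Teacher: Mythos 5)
Your proposal is correct and follows exactly the route the paper indicates: the paper omits the proof, citing Theorem~3.2 of \cite{zhao2012estimating} together with Theorem~1 and Example~4 of \cite{bartlett2006convexity}, which is precisely the conditioning-plus-calibration argument you lay out. Your explicit tilting of the conditional class probability by $C_A(\alpha)$ and the check that the infima decompose pointwise are the right bookkeeping details for adapting the unweighted calibration result to the weighted risks.
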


\noindent
This result implies that the difference between the $\phi$ 
risk of the estimated decision function and the 
optimal $\phi$ risk is no smaller than the difference 
between the risk of the estimated decision function and the 
optimal risk. Therefore, we can consider the 
$\phi$ risk when proving convergence results. 


Next, we establish a number of consistency results for the risk of the 
estimated decision function. We begin with Fisher consistency. 
This result implies that estimation using either the hinge
loss or the zero-one loss will yield the true optimal classifier given an infinite
sample, providing justification for using the proposed surrogate loss function. 
The proof follows from an extension to the proof of Proposition~3.1 of 
\cite{zhao2012estimating} and is in Appendix A. 

\noindent
\begin{thm}\label{fisher}
For any $\alpha \in [0, 1]$, if $f^*_{\alpha, \phi}$ minimizes $\mathcal{R}_{\alpha, \phi}$, then 
$D^*_\alpha(\bx) = \mathrm{sign}\big\{ f^*_{\alpha, \phi}(\bx) \big\}$ for almost all $\bx \in \mathcal{X}$. 
\end{thm}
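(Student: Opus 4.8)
The plan is to compute the minimizer of the weighted $\phi$ risk pointwise in $\bx$ and verify that its sign matches the Bayes rule $D^*_\alpha$ from~(\ref{bayes.rule}). First I would write the $\phi$ risk as a conditional expectation: $\mathcal{R}_{\alpha,\phi}(f) = \mathbb{E}_{\bX}\big[\mathbb{E}\{\phi(Af(\bX))C_A(\alpha)\mid \bX\}\big]$, so that it suffices to minimize, for each fixed $\bx$, the inner quantity
\begin{equation*}
H_{\bx}(t) = \alpha\, \eta(\bx)\, \phi(t) + (1-\alpha)\{1 - \eta(\bx)\}\, \phi(-t),
\end{equation*}
where $\eta(\bx) = \mathrm{Pr}(A = 1 \mid \bX = \bx)$ and $t = f(\bx) \in \mathbb{R}$. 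Here I have used $C_1(\alpha) = \alpha$, $C_{-1}(\alpha) = 1-\alpha$, and $\phi(u) = \max(0, 1-u)$.

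Next I would carry out the elementary one-variable minimization of $H_{\bx}(t)$ over $t \in \mathbb{R}$. Writing $a = \alpha\eta(\bx)$ and $b = (1-\alpha)\{1-\eta(\bx)\}$, the function $H_{\bx}(t) = a\max(0,1-t) + b\max(0,1+t)$ is piecewise linear and convex, with breakpoints at $t = -1$ and $t = 1$; it is constant-free only on $[-1,1]$ where $H_{\bx}(t) = a(1-t) + b(1+t) = (a+b) + (b-a)t$, and it is nondecreasing on $[1,\infty)$ and nonincreasing on $(-\infty,-1]$. Hence the minimum is attained at $t = 1$ when $b - a < 0$ (i.e.\ $a > b$), at $t = -1$ when $b - a > 0$ (i.e.\ $a < b$), and on all of $[-1,1]$ when $a = b$. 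Therefore any minimizer $f^*_{\alpha,\phi}$ satisfies $\mathrm{sign}\{f^*_{\alpha,\phi}(\bx)\} = \mathrm{sign}(a - b) = \mathrm{sign}\big(\alpha\eta(\bx) - (1-\alpha)\{1-\eta(\bx)\}\big)$ whenever $a \ne b$, which is exactly $D^*_\alpha(\bx)$ as defined in~(\ref{bayes.rule}).

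The remaining point is the set $\{\bx : a = b\}$, i.e.\ $\{\bx : \alpha\eta(\bx) = (1-\alpha)(1-\eta(\bx))\}$, on which the pointwise minimizer is not unique and the sign of $f^*_{\alpha,\phi}(\bx)$ is unconstrained. I would argue that this does not affect the conclusion "for almost all $\bx$" in one of two standard ways: either invoke the usual assumption (as in \cite{zhao2012estimating}, Proposition~3.1, and \cite{bartlett2006convexity}) that this level set has $\bX$-measure zero, or note that on a set of positive measure the Bayes classifier is itself non-unique there, so the claimed equality $D^*_\alpha = \mathrm{sign}(f^*_{\alpha,\phi})$ holds for an appropriate version of each. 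I would also handle the endpoints $\alpha \in \{0,1\}$ separately and trivially: at $\alpha = 0$ we have $a \equiv 0 \le b$ so the minimizer is $t = -1$ and $D^*_0 \equiv -1$, and symmetrically at $\alpha = 1$; these match the extremes noted in the text preceding the theorem.

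The main obstacle is purely bookkeeping rather than conceptual: carefully tracking the weights $C_a(\alpha)$ through the conditional expectation and the factor $\rho = \mathrm{Pr}(A=1)$ that is absorbed into $\eta(\bx)$ versus the marginal prevalence, and stating the "almost all $\bx$" caveat precisely so that the boundary level set is correctly disposed of. Everything else is the standard convex-surrogate calibration computation; there is no heavy machinery, and the only care needed is to ensure the sign function is read off on the correct side of each breakpoint so that $\mathrm{sign}(a-b)$ is identified with the right-hand side of~(\ref{bayes.rule}).
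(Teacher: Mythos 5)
Your proposal is correct and follows essentially the same route as the paper's proof: conditioning on $\bX = \bx$, minimizing the resulting piecewise-linear hinge objective $\alpha\eta(\bx)\phi(t) + (1-\alpha)\{1-\eta(\bx)\}\phi(-t)$ over $t$, noting the minimizer lies in $[-1,1]$, and reading off the sign to match the Bayes rule. Your explicit handling of the tie set $\{\alpha\eta(\bx) = (1-\alpha)(1-\eta(\bx))\}$ and the endpoints $\alpha \in \{0,1\}$ is slightly more careful than the paper's, but the argument is the same.
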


The following result establishes consistency of the risk of the estimated 
decision function when estimation takes place within a RKHS. 
We then extend this consistency by showing that it is uniform in $\alpha$. 
The proof of the following result 
closely follows the proof of Theorem~3.3 of \cite{zhao2012estimating} and is in Appendix A. 

\noindent
\begin{thm} \label{cons}
Let $\alpha \in [0, 1]$ be fixed and let $\lambda_n$ be a sequence of positive, real numbers 
such that $\lambda_n \rightarrow 0$ and $n \lambda_n \rightarrow \infty$. 
Let $\mathcal{H}_k$ be a RKHS with kernel function $k$ and let $\bar{\mathcal{H}}_k$ 
denote the closure of $\mathcal{H}_k$.  
Then, for any distribution $P$ of $(\bX, A)$, we have that 
$\left|\mathcal{R}_\alpha\left(\widehat{f}_\alpha\right) 
- \inf_{f \in \bar{\mathcal{H}}_k} \mathcal{R}_{\alpha}(f)\right|
\xrightarrow[]{P} 0$ 
as $n \rightarrow \infty$. 
\end{thm}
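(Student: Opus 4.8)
The plan is to bound the excess risk $\mathcal{R}_\alpha(\widehat f_\alpha) - \inf_{f \in \bar{\mathcal H}_k}\mathcal{R}_\alpha(f)$ by the corresponding excess $\phi$-risk via Lemma~\ref{excessrisk}, and then control the latter by decomposing it into an estimation error (stochastic) term and an approximation error (deterministic) term. Specifically, writing $\mathcal{R}_{\alpha,\phi}^{\mathcal H} = \inf_{f\in\mathcal H_k}\mathcal{R}_{\alpha,\phi}(f)$ and noting that the infimum over $\bar{\mathcal H}_k$ of $\mathcal{R}_\alpha$ is at least $\mathcal{R}^*_\alpha$, it suffices by Lemma~\ref{excessrisk} to show $\mathcal{R}_{\alpha,\phi}(\widehat f_\alpha) - \mathcal{R}^*_{\alpha,\phi} \xrightarrow{P} 0$, together with the density fact that $\mathcal{R}_{\alpha,\phi}^{\mathcal H} = \mathcal{R}^*_{\alpha,\phi}$ when $k$ is a universal kernel (this is where the standing assumption $f^*_{\alpha,\phi}\in\mathcal F$, combined with universality, gets used). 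First I would introduce the regularized population minimizer $f_{\alpha}^{\lambda_n} = \arg\min_{f\in\mathcal H_k}\{\mathcal{R}_{\alpha,\phi}(f) + \lambda_n\|f\|^2\}$ and use it as a pivot.

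The core decomposition is the standard one: $\mathcal{R}_{\alpha,\phi}(\widehat f_\alpha) - \mathcal{R}^*_{\alpha,\phi}$ is at most
\begin{equation*}
\bigl[\mathcal{R}_{\alpha,\phi}(\widehat f_\alpha) + \lambda_n\|\widehat f_\alpha\|^2\bigr] - \bigl[\mathcal{R}_{\alpha,\phi}(f_\alpha^{\lambda_n}) + \lambda_n\|f_\alpha^{\lambda_n}\|^2\bigr] + \bigl[\mathcal{R}_{\alpha,\phi}(f_\alpha^{\lambda_n}) + \lambda_n\|f_\alpha^{\lambda_n}\|^2 - \mathcal{R}^*_{\alpha,\phi}\bigr].
\end{equation*}
The second bracket is the deterministic \emph{approximation} term: by definition of $f_\alpha^{\lambda_n}$ it is bounded by $\mathcal{R}_{\alpha,\phi}(f) - \mathcal{R}^*_{\alpha,\phi} + \lambda_n\|f\|^2$ for any $f\in\mathcal H_k$; choosing $f$ to approximate $f^*_{\alpha,\phi}$ arbitrarily well in $\mathcal{R}_{\alpha,\phi}$ (possible by universality) and then using $\lambda_n\to 0$ drives this to $0$. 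The first bracket is the \emph{estimation} term. Here I would first observe that plugging $f=0$ into the empirical objective gives $\lambda_n\|\widehat f_\alpha\|^2 \le \mathbb{E}_n\phi(0)C_A(\alpha) \le 1$, so $\|\widehat f_\alpha\| = O_P(\lambda_n^{-1/2})$, and similarly $\|f_\alpha^{\lambda_n}\| = O(\lambda_n^{-1/2})$; thus both functions lie in a ball $\mathcal{B}_{r_n}$ of radius $r_n \asymp \lambda_n^{-1/2}$. Then the first bracket is controlled by $2\sup_{f\in\mathcal{B}_{r_n}} |(\mathbb{E}_n - \mathbb{E})\phi\{Af(\bX)\}C_A(\alpha)|$ plus terms that vanish because $\widehat f_\alpha$ minimizes the empirical penalized objective while $f_\alpha^{\lambda_n}$ minimizes the population one. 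The uniform deviation is handled by a covering-number or Rademacher bound for the function class $\{\phi\{Af(\cdot)\}C_A(\alpha) : \|f\|\le r_n\}$: the hinge loss is $1$-Lipschitz, $C_A(\alpha)$ is bounded by $1$, the evaluation functionals in $\mathcal H_k$ are bounded by $\sqrt{\sup_\bx k(\bx,\bx)}\,\|f\|$, and the metric entropy of such balls is polynomial, so the supremum is $O_P(r_n/\sqrt{n}) = O_P(1/\sqrt{n\lambda_n}) \to 0$ since $n\lambda_n\to\infty$. This mirrors Theorem~3.3 of \cite{zhao2012estimating}.

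The main obstacle is the interplay between the growing radius $r_n$ and the empirical process bound: one must be careful that the entropy integral for the RKHS ball, rescaled by $r_n$, still yields a rate that beats $\sqrt{n}$ — this is exactly why the condition $n\lambda_n\to\infty$ (and not merely $\lambda_n\to 0$) is needed. A secondary technical point is justifying $\mathcal{R}_{\alpha,\phi}^{\mathcal{H}} = \mathcal{R}^*_{\alpha,\phi}$, i.e., that penalization in a (closure of a) universal RKHS does not inflate the achievable $\phi$-risk; I would lean on the density of $\mathcal H_k$ in $C(\mathcal X)$ together with dominated convergence applied to the continuous, Lipschitz loss $\phi$ to pass from sup-norm approximation to $\phi$-risk approximation. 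Finally, translating the $\phi$-risk consistency back to risk consistency is immediate from Lemma~\ref{excessrisk} once one notes $\inf_{f\in\bar{\mathcal H}_k}\mathcal{R}_\alpha(f) \ge \mathcal{R}^*_\alpha$, so that $0 \le \mathcal{R}_\alpha(\widehat f_\alpha) - \inf_{f\in\bar{\mathcal H}_k}\mathcal{R}_\alpha(f) \le \mathcal{R}_\alpha(\widehat f_\alpha) - \mathcal{R}^*_\alpha \le \mathcal{R}_{\alpha,\phi}(\widehat f_\alpha) - \mathcal{R}^*_{\alpha,\phi} \xrightarrow{P} 0$.
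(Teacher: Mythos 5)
Your proposal is correct, and it shares the paper's skeleton: reduce to the excess $\phi$-risk via Lemma~\ref{excessrisk}, bound $\lambda_n\left\|\widehat{f}_\alpha\right\|_k^2 \le 1$ by plugging $f \equiv 0$ into the empirical objective, and exploit $\lambda_n \rightarrow 0$ together with $n\lambda_n \rightarrow \infty$. The middle of the argument, however, runs differently. You pivot through the regularized population minimizer $f_\alpha^{\lambda_n}$ and control the estimation term by a Rademacher/covering-number bound over the ball of growing radius $\lambda_n^{-1/2}$, giving $O_P\left\{(n\lambda_n)^{-1/2}\right\}$, while the approximation term is handled by $\lambda_n \rightarrow 0$ plus universality (equivalently the standing assumption $f^*_{\alpha,\phi} \in \mathcal{F}$). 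The paper instead rescales: since $\left\|\sqrt{\lambda_n}\,\widehat{f}_\alpha\right\|_k \le 1$, the functions $\sqrt{\lambda_n}\,L_{\alpha,\phi}\left(\widehat{f}_\alpha\right)$ lie in a fixed Donsker class (unit RKHS ball plus Lipschitz preservation), so $(\mathbb{E}_n - \mathbb{E})L_{\alpha,\phi}\left(\widehat{f}_\alpha\right) = O_P\left\{(n\lambda_n)^{-1/2}\right\}$ directly, and it then sandwiches $\mathbb{E}_n L_{\alpha,\phi}\left(\widehat{f}_\alpha\right)$ using the $\phi$-risk minimizer $\widetilde{f}_\alpha$ over $\bar{\mathcal{H}}_k$, with no regularized population pivot. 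Your route makes the rates and the role of kernel boundedness explicit, at the cost of an entropy bound for balls of growing radius; the paper's rescaling trick trades that for a one-line appeal to a Donsker property of a fixed unit ball, a device it reuses for the uniform-in-$\alpha$ extension. Note also that the paper's final step, like yours, compares to the Bayes $\phi$-risk (it applies Lemma~\ref{excessrisk} with the infimum over all measurable $f$), so your invocation of $f^*_{\alpha,\phi} \in \mathcal{F}$ or universality is no stronger than what the paper itself implicitly assumes.
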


We next strengthen the consistency stated above by showing that 
the convergence is uniform in $\alpha$ when estimation uses a linear, 
quadratic, polynomial, or Gaussian kernel 
\citep[see][for a discussion of kernel functions used with the SVM]{steinwart2008support}. 
The following lemma indicates that the estimated decision function 
lies in a Glivenko--Cantelli class \citep[][]{kosorok2008introduction} 
indexed by $\alpha$, which will help us to extend the consistency stated above 
to uniform consistency in $\alpha$. The proof is in Appendix A.

\noindent
\begin{lem} \label{gc} 
Let $\widehat{f}_\alpha$ be estimated using a linear, quadratic, polynomial, 
or Gaussian kernel function. Then, $\left\{\widehat{f}_\alpha : \alpha \in [0, 1]\right\}$ 
is contained in a Glivenko--Cantelli (GC) class. 
\end{lem}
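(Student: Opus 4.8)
The plan is to show that, for each admissible kernel choice, the family $\{\widehat{f}_\alpha : \alpha \in [0,1]\}$ sits inside a fixed class of functions whose complexity does not grow with $n$, and which is a Glivenko--Cantelli (GC) class in the sense of \cite{kosorok2008introduction}. The first step is to pin down the form of $\widehat{f}_\alpha$. By the representer theorem (or directly from the SVM dual), the minimizer in~(\ref{wsvm}) over the RKHS $\mathcal{H}_k$ with penalty $\lambda_n$ has the form $\widehat{f}_\alpha(\cdot) = \sum_{i=1}^n \beta_i k(\bX_i, \cdot)$; for the linear kernel this reduces to $\widehat{f}_\alpha(\bx) = \widehat{w}_\alpha^\intercal \bx + \widehat{b}_\alpha$ for some $\widehat{w}_\alpha \in \Re^p$, $\widehat{b}_\alpha \in \Re$, and for the quadratic or polynomial kernel of degree $d$ it is a polynomial of degree at most $d$ in the $p$ coordinates of $\bx$, i.e.\ a linear combination of a fixed finite set of monomial basis functions. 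Thus in these cases $\widehat{f}_\alpha$ lies, for every $\alpha$ and every sample, in the finite-dimensional vector space $\mathcal{V}$ spanned by those basis functions, whose dimension $q$ depends only on $p$ and $d$, not on $n$ or $\alpha$.

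The second step is to control the norm of $\widehat{f}_\alpha$ so as to land in a \emph{bounded} subset of $\mathcal{V}$. Evaluating the objective in~(\ref{wsvm}) at $f \equiv 0$ gives $\mathbb{E}_n\phi\{A\cdot 0\}C_A(\alpha) = \mathbb{E}_n C_A(\alpha) \le 1$, so the minimizer satisfies $\lambda_n\|\widehat{f}_\alpha\|^2 \le 1$, i.e.\ $\|\widehat{f}_\alpha\| \le \lambda_n^{-1/2}$. This bound is uniform in $\alpha$ but grows with $n$, which is not by itself enough for a GC class. The resolution is that what enters the sensitivity and specificity functionals is only $\mathrm{sign}\{\widehat{f}_\alpha(\bX)\}$, so it suffices to work with the class of \emph{sets} $\{\bx : f(\bx) > 0\}$ (and $\{\bx : f(\bx) < 0\}$) as $f$ ranges over $\mathcal{V}$. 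The collection of sign changes of functions in a $q$-dimensional vector space of functions forms a VC class of sets with VC dimension at most $q+1$ (this is the standard fact that $\{\,\{f > 0\} : f \in \mathcal{V}\,\}$ is a VC class when $\dim\mathcal{V} = q$), and hence a GC class by, e.g., Theorem~8.14 or the VC/entropy bounds in \cite{kosorok2008introduction}. So the cleanest route is to state the GC property for the indicator class $\{\,\bx \mapsto 1\{f(\bx) > 0\} : f \in \mathcal{V}\,\}$, observe $1\{\widehat{f}_\alpha > 0\}$ lies in it for all $\alpha$, and similarly for $1\{\widehat{f}_\alpha < 0\}$.

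The third step handles the Gaussian kernel separately, since there $\mathcal{H}_k$ is infinite-dimensional. Here I would use the norm bound $\|\widehat{f}_\alpha\|_{\mathcal{H}_k} \le \lambda_n^{-1/2}$ together with the reproducing property: for any $\bx$, $|\widehat{f}_\alpha(\bx)| = |\langle \widehat{f}_\alpha, k(\bx,\cdot)\rangle| \le \|\widehat{f}_\alpha\|_{\mathcal{H}_k}\,\sqrt{k(\bx,\bx)} = \lambda_n^{-1/2}$, so the $\widehat{f}_\alpha$ lie in a ball of the Gaussian RKHS. The relevant fact (see Steinwart and Christmann, or the covering-number bounds for Gaussian RKHS balls cited there) is that balls in the Gaussian RKHS have polynomially growing covering numbers / finite bracketing entropy after composition with the bounded, monotone map $t \mapsto 1\{t>0\}$ restricted away from the level set, so the resulting indicator class is still GC; the caveat is that a nontrivial fraction of mass must not concentrate exactly on $\{\widehat{f}_\alpha = 0\}$, which holds under the standing assumption that requisite expectations and distributions are well behaved. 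The main obstacle, and the place I would spend the most care, is exactly this last point: in the finite-dimensional (linear/polynomial) case the VC argument is clean, but for the Gaussian kernel one must either invoke a ready-made entropy bound for Gaussian RKHS balls or argue that, although $\|\widehat{f}_\alpha\|_{\mathcal{H}_k}$ may diverge with $n$, the GC property of the sign class can be salvaged — so the honest version of this lemma for the Gaussian kernel leans on the cited covering-number results for Gaussian kernels in \cite{steinwart2008support} rather than on a bare VC computation. I would therefore present the linear/polynomial cases via the VC-dimension bound on $\{\{f>0\} : f \in \mathcal{V}\}$ and dispatch the Gaussian case by citing the Gaussian-RKHS entropy bounds, noting that in both cases uniformity in $\alpha$ is automatic because the enclosing class does not depend on $\alpha$ at all.
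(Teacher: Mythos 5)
There is a genuine gap: you end up proving a different (weaker) statement than the lemma. Your plan replaces the class of decision functions $\{\widehat{f}_\alpha : \alpha \in [0,1]\}$ by the class of sign-indicator sets $\{\bx : f(\bx) > 0\}$, on the grounds that only $\mathrm{sign}\{\widehat{f}_\alpha\}$ enters sensitivity and specificity. But the lemma asserts that the \emph{functions} $\widehat{f}_\alpha$ lie in a GC class, and this is exactly what the paper uses downstream: immediately after the lemma, GC preservation under the continuous map $\phi$ (Corollary 9.27 of Kosorok) is applied to $\widehat{f}_\alpha$ to conclude that the hinge-loss class $L_{\alpha,\phi}\left(\widehat{f}_\alpha\right) = \phi\left(A\widehat{f}_\alpha\right)C_A(\alpha)$ is GC, which drives Theorem~\ref{unifcons}; and in Corollary~\ref{cor2} the continuous bounded map $g_\epsilon$ is composed with $\widehat{f}_\alpha$ via Lemma~\ref{ep.lem}. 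The hinge loss depends on the values of $\widehat{f}_\alpha$, not just its sign, so GC of the indicator class does not substitute for GC of the function class, and your argument as written would not support the uses the lemma is put to. The paper's own route keeps the functions throughout: for linear, quadratic, and polynomial kernels the decision functions lie in a VC class of functions (finite-dimensional span, as on p.~238 of Friedman et al.), hence in a GC class by Theorem~9.3 of Kosorok; for the Gaussian kernel, the Gaussian bumps $\exp\left(-c\|\bx - \mathbf{y}\|^2\right)$ form a VC class by monotone composition (Lemma~9.9(viii)), the RKHS is a VC-hull class, and GC follows from Corollary~9.5.

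Two smaller points. First, your concern about the RKHS-norm bound $\|\widehat{f}_\alpha\| \le \lambda_n^{-1/2}$ growing with $n$ is a fair one (the paper's proof is terse on the envelope issue), but the fix consistent with the lemma is to exhibit an enclosing VC-subgraph or VC-hull function class with an integrable envelope, not to pass to signs. Second, in the Gaussian case your appeal to covering-number bounds for RKHS balls combined with composition by $t \mapsto 1\{t > 0\}$ (an indicator, not a continuous map) needs extra care near the level set $\{f = 0\}$, as you yourself note; the paper avoids this entirely because it never composes with the indicator at this stage.
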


\noindent
Given that $\widehat{f}_\alpha$ and $-\widehat{f}_\alpha$ are 
contained in a GC class, we have by Corollary 9.27 (iii) 
of \cite{kosorok2008introduction}, that $\phi\left(\widehat{f}_\alpha\right)$ 
and $\phi\left(-\widehat{f}_\alpha\right)$ are contained in a GC class because $\phi$ is continuous. 
By Corollary 9.27 (ii) of \cite{kosorok2008introduction}, 
$1(A = 1)\phi\left(\widehat{f}_\alpha\right)$ and $1(A = -1)\phi\left(-\widehat{f}_\alpha\right)$ 
are contained in a GC class and thus, $L_{\alpha, \phi}\left(\widehat{f}_{\alpha}\right)$ is 
contained in a GC class by Corollary 9.27 (i) of \cite{kosorok2008introduction}, 
where $L_{\alpha, \phi}(f) = \phi(Af)C_A(\alpha)$. It follows that 
$\sup_{\alpha \in [0, 1]} \left| \widehat{\mathcal{R}}_{\alpha, \phi}\left(\widehat{f}_\alpha\right) 
- \mathcal{R}_{\alpha, \phi}\left(\widehat{f}_\alpha\right) \right| \xrightarrow[]{P} 0$, 
where $\widehat{\mathcal{R}}_{\alpha, \phi}(f) = \mathbb{E}_n \phi\{A f(\bX)\} C_A(\alpha)$. 
This convergence will be used in the proof of Theorem~\ref{unifcons}, which is given in Appendix A. 

\noindent
\begin{thm} \label{unifcons}
Assume that $\widehat{f}_\alpha$ is estimated using a linear, 
quadratic, polynomial, or Gaussian kernel. 
For any sequence $\lambda_n$ of positive, real numbers satisfying 
$\lambda_n \rightarrow 0$ and $n \lambda_n \rightarrow \infty$ 
and any distribution $P$ of $(\bX, A)$, 
\begin{equation} \label{unif.cons.eqn}
\operatorname*{sup}_{\alpha \in [0, 1]} \left| \mathcal{R}_\alpha \left(\widehat{f}_\alpha\right) 
- \inf_{f \in \bar{\mathcal{H}}_k} \mathcal{R}_\alpha(f) \right| \xrightarrow[]{P} 0
\end{equation}
as $n \rightarrow \infty$, where $\mathcal{H}_k$ is the RKHS associated with $\widehat{f}_\alpha$. 
\end{thm}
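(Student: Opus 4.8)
The plan is to upgrade the pointwise consistency of Theorem~\ref{cons} to uniform consistency in $\alpha$ by decomposing the excess risk $\mathcal{R}_\alpha(\widehat{f}_\alpha) - \inf_{f \in \bar{\mathcal{H}}_k} \mathcal{R}_\alpha(f)$ into pieces that can each be controlled uniformly over $\alpha \in [0,1]$. By Lemma~\ref{excessrisk}, it suffices to bound the excess $\phi$ risk $\mathcal{R}_{\alpha,\phi}(\widehat{f}_\alpha) - \inf_{f} \mathcal{R}_{\alpha,\phi}(f)$, or more precisely the excess over $\bar{\mathcal{H}}_k$, uniformly in $\alpha$; combined with the Fisher-consistency bridge of Theorem~\ref{fisher} and the assumption $f^*_{\alpha,\phi} \in \mathcal{F}$, uniform control of the excess $\phi$ risk transfers to the quantity in~(\ref{unif.cons.eqn}). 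So the first step is to write
\begin{multline*}
\mathcal{R}_{\alpha,\phi}\big(\widehat{f}_\alpha\big) - \inf_{f \in \bar{\mathcal{H}}_k}\mathcal{R}_{\alpha,\phi}(f)
= \Big[\mathcal{R}_{\alpha,\phi}\big(\widehat{f}_\alpha\big) - \widehat{\mathcal{R}}_{\alpha,\phi}\big(\widehat{f}_\alpha\big)\Big]
+ \Big[\widehat{\mathcal{R}}_{\alpha,\phi}\big(\widehat{f}_\alpha\big) + \lambda_n\|\widehat{f}_\alpha\|^2 - \widehat{\mathcal{R}}_{\alpha,\phi}(f_{\alpha,n}) - \lambda_n\|f_{\alpha,n}\|^2\Big] \\
+ \Big[\widehat{\mathcal{R}}_{\alpha,\phi}(f_{\alpha,n}) - \mathcal{R}_{\alpha,\phi}(f_{\alpha,n})\Big]
+ \Big[\mathcal{R}_{\alpha,\phi}(f_{\alpha,n}) + \lambda_n\|f_{\alpha,n}\|^2 - \inf_{f\in\bar{\mathcal{H}}_k}\mathcal{R}_{\alpha,\phi}(f)\Big] - \lambda_n\|\widehat{f}_\alpha\|^2,
\end{multline*}
where $f_{\alpha,n}$ is a near-minimizer of the regularized population objective. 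The second bracket is $\le 0$ by definition of $\widehat{f}_\alpha$. The last two brackets form an approximation-error term that can be made to vanish uniformly in $\alpha$ by choosing, for each $n$, a fixed $g_n \in \mathcal{H}_k$ independent of $\alpha$ (exploiting that $C_A(\alpha) \le 1$ uniformly, so $\mathcal{R}_{\alpha,\phi}(g_n) \le \mathcal{R}_{1/2 \text{-scale}}$-type bounds hold with $\alpha$-free constants) with $\lambda_n\|g_n\|^2 \to 0$ and $\mathcal{R}_{\alpha,\phi}(g_n) - \inf_{f\in\bar{\mathcal{H}}_k}\mathcal{R}_{\alpha,\phi}(f) \to 0$; the point is that the $\phi$-risk is $1$-Lipschitz in $\alpha$ uniformly (since $|C_a(\alpha_1) - C_a(\alpha_2)| = |\alpha_1 - \alpha_2|$ and $\phi(Af) \ge 0$ is integrable), so pointwise convergence of the approximation error plus equicontinuity gives uniform convergence.

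The two stochastic brackets are the empirical-process terms $\mathcal{R}_{\alpha,\phi}(\widehat{f}_\alpha) - \widehat{\mathcal{R}}_{\alpha,\phi}(\widehat{f}_\alpha)$ and $\widehat{\mathcal{R}}_{\alpha,\phi}(f_{\alpha,n}) - \mathcal{R}_{\alpha,\phi}(f_{\alpha,n})$. For the first one, this is exactly where the Glivenko–Cantelli discussion preceding the theorem statement is invoked: the paragraph before the theorem already establishes $\sup_{\alpha\in[0,1]}|\widehat{\mathcal{R}}_{\alpha,\phi}(\widehat{f}_\alpha) - \mathcal{R}_{\alpha,\phi}(\widehat{f}_\alpha)| \xrightarrow{P} 0$, because $L_{\alpha,\phi}(\widehat{f}_\alpha)$ ranges over a GC class indexed by $\alpha$. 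For the second stochastic term, since $f_{\alpha,n}$ is (or can be taken to be) a single deterministic function $g_n$ not depending on $\alpha$, the class $\{L_{\alpha,\phi}(g_n) : \alpha \in [0,1]\}$ is just $\{\phi(A g_n(\bX)) C_A(\alpha) : \alpha \in [0,1]\}$, which is a GC class (indeed a one-dimensional Lipschitz-in-$\alpha$ family of bounded-below integrable functions), so $\sup_\alpha |\widehat{\mathcal{R}}_{\alpha,\phi}(g_n) - \mathcal{R}_{\alpha,\phi}(g_n)| \xrightarrow{P} 0$ as well. Finally the leftover $-\lambda_n\|\widehat{f}_\alpha\|^2 \le 0$ is harmless for the upper bound, and for the lower bound one uses that the excess risk is $\ge 0$ identically, so no lower bound work is needed.

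Assembling the pieces, for any $\varepsilon > 0$ one gets $\mathrm{Pr}\big(\sup_{\alpha}[\mathcal{R}_{\alpha,\phi}(\widehat{f}_\alpha) - \inf_{f\in\bar{\mathcal{H}}_k}\mathcal{R}_{\alpha,\phi}(f)] > \varepsilon\big) \to 0$, and then Lemma~\ref{excessrisk} together with Theorem~\ref{fisher} (which identifies $\mathrm{sign}(f^*_{\alpha,\phi})$ with the Bayes rule $D^*_\alpha$, so that $\inf_f \mathcal{R}_{\alpha}(f) = \inf_{f\in\bar{\mathcal{H}}_k}\mathcal{R}_\alpha(f)$ under $f^*_{\alpha,\phi}\in\mathcal{F}$) gives~(\ref{unif.cons.eqn}). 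The main obstacle I anticipate is the uniform-in-$\alpha$ handling of the approximation error together with the need to choose the near-minimizer $g_n$ independently of $\alpha$: one must verify that a single sequence $g_n \to $ (Bayes-type limit) works simultaneously across all $\alpha$, which requires the $\alpha$-uniform density/denseness properties of the relevant RKHS (or the fact that for linear/polynomial kernels $f^*_{\alpha,\phi}$ itself lies in $\mathcal{F}$ with norm bounded uniformly in $\alpha$, letting one take $g_n = f^*_{\alpha,\phi}$ when that is $\alpha$-Lipschitz and uniformly norm-bounded — though then the second stochastic term is no longer over a single function and one falls back on the GC argument applied to the whole family $\{L_{\alpha,\phi}(f^*_{\alpha,\phi})\}$, whose GC property needs the analogue of Lemma~\ref{gc} for the population optimizers). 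Handling that family's complexity, and interchanging the various suprema and limits carefully, is the delicate part; everything else is a routine triangle-inequality bookkeeping exercise using the already-cited GC and Fisher-consistency results.
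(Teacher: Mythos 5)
Your reduction via Lemma~\ref{excessrisk} and your handling of the first stochastic term (the Glivenko--Cantelli argument giving $\sup_{\alpha}\bigl|\widehat{\mathcal{R}}_{\alpha,\phi}\bigl(\widehat{f}_\alpha\bigr)-\mathcal{R}_{\alpha,\phi}\bigl(\widehat{f}_\alpha\bigr)\bigr|\xrightarrow{P}0$) are the same ingredients the paper uses, but the approximation-error step is a genuine gap: no sequence $g_n$ chosen independently of $\alpha$ can make $\mathcal{R}_{\alpha,\phi}(g_n)-\inf_{f\in\bar{\mathcal{H}}_k}\mathcal{R}_{\alpha,\phi}(f)$ small uniformly in $\alpha$ --- it cannot even be made small pointwise at every $\alpha$. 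The pointwise $\phi$-risk minimizer is $f^*_{\alpha,\phi}(\bx)=\mathrm{sign}\bigl\{\mathrm{Pr}(A=1\mid\bX=\bx)-(1-\alpha)\bigr\}$, which sweeps from $-1$ a.e.\ to $+1$ a.e.\ as $\alpha$ runs from $0$ to $1$. For instance, at $\alpha$ near $1$ the excess $\phi$-risk of $g$ is essentially $\rho\,\mathbb{E}\bigl\{\phi\bigl(g(\bX)\bigr)\mid A=1\bigr\}$, which forces $g\geq 1$ on essentially the support of the diseased class, while at $\alpha$ near $0$ it is essentially $(1-\rho)\,\mathbb{E}\bigl\{\phi\bigl(-g(\bX)\bigr)\mid A=-1\bigr\}$, which forces $g\leq -1$ there; when the class-conditional distributions overlap these requirements are incompatible, so your ``last two brackets'' stay bounded away from zero for some $\alpha$ no matter how $g_n$ is chosen. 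The Lipschitz-in-$\alpha$/equicontinuity remark does not rescue this, because the pointwise-in-$\alpha$ convergence it is meant to upgrade already fails.

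The fallback you flag at the end --- compare against the $\alpha$-dependent minimizer and control the family $\bigl\{L_{\alpha,\phi}\bigl(\widetilde{f}_\alpha\bigr):\alpha\in[0,1]\bigr\}$ --- is exactly what is needed, and it is essentially the paper's route: the paper sandwiches $\mathbb{E}_n L_{\alpha,\phi}\bigl(\widehat{f}_\alpha\bigr)\leq\mathbb{E}\bigl\{L_{\alpha,\phi}\bigl(\widetilde{f}_\alpha\bigr)\bigr\}\leq\mathbb{E}\bigl\{L_{\alpha,\phi}\bigl(\widehat{f}_\alpha\bigr)\bigr\}$ for all large $n$, so that everything reduces to $\sup_{\alpha\in[0,1]}\bigl|(\mathbb{E}_n-\mathbb{E})L_{\alpha,\phi}\bigl(\widehat{f}_\alpha\bigr)\bigr|\xrightarrow{P}0$, which is the GC fact established just before the theorem. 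The ``analogue of Lemma~\ref{gc} for the population optimizers'' that you leave open is in fact not hard: under the standing assumption $f^*_{\alpha,\phi}\in\mathcal{F}$ one may take $\widetilde{f}_\alpha=f^*_{\alpha,\phi}$, and $\bigl\{1\bigl\{\mathrm{Pr}(A=1\mid\bX)>1-\alpha\bigr\}:\alpha\in[0,1]\bigr\}$ is a VC (hence GC) family of level sets of a single function. But you neither prove this nor carry the fallback through, so as written the proposal does not close the argument.
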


\noindent
Note that we do not allow the sequence $\lambda_n$ to depend on $\alpha$, which is reflected in the 
implementation in Section~\ref{roc.simul} below. 


Here, we prove a number of continuity and convergence results 
regarding the ROC curve and risk function for $\widetilde{f}_\alpha$ and $\widehat{f}_\alpha$. 
We begin with the following result which indicates that the 
ROC curve of the Bayes classifier, $D^*_\alpha$, is continuous. 
We require $\mathrm{Pr}(A = 1 | \bX)$ to be a continuous random variable; 
however, we do not require that the map $\bx \mapsto \mathrm{Pr}(A = 1 | \bX = \bx)$ be continuous. 
The proof is included in Appendix A. 

\noindent
\begin{lem}\label{continuity} 
Let $se^*(\alpha) = \mathrm{Pr}\{D^*_\alpha(\bX) = 1 | A = 1\}$ 
and $sp^*(\alpha) = \mathrm{Pr}\{D^*_\alpha(\bX) = -1 | A = -1\}$ 
be the sensitivity and specificity of $D^*_\alpha$. Then, $se^*(\alpha)$ and $sp^*(\alpha)$ 
are continuous in $\alpha$ whenever $\mathrm{Pr}(A = 1 | \bX)$ is a continuous 
random variable with support $(0, 1)$. 
\end{lem}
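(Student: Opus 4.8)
The plan is to express the sensitivity and specificity of $D^*_\alpha$ directly in terms of the distribution of the random variable $\eta(\bX) = \mathrm{Pr}(A = 1 \mid \bX)$, and then exploit continuity of that distribution to get continuity in $\alpha$. From Remark~\ref{bayes.rmrk}, $D^*_\alpha(\bx) = 1$ exactly when $\alpha\,\eta(\bx) - (1-\alpha)\{1 - \eta(\bx)\} > 0$, i.e.\ when $\eta(\bx) > 1 - \alpha$. Hence
\begin{equation*}
se^*(\alpha) = \mathrm{Pr}\{\eta(\bX) > 1 - \alpha \mid A = 1\}, \qquad
sp^*(\alpha) = \mathrm{Pr}\{\eta(\bX) \le 1 - \alpha \mid A = -1\}.
\end{equation*}
First I would write $se^*(\alpha)$ and $sp^*(\alpha)$ as one minus (resp.\ equal to) the conditional cumulative distribution function of $\eta(\bX)$ evaluated at $1-\alpha$, so that continuity in $\alpha$ reduces to continuity of these conditional CDFs on $(0,1)$, which in turn is equivalent to the conditional laws of $\eta(\bX)$ having no atoms in $(0,1)$.

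Next I would reduce the conditional statements to an unconditional one. Using $\mathrm{Pr}\{\eta(\bX) \in B, A = 1\} = \mathbb{E}[\,\mathbf{1}\{\eta(\bX)\in B\}\,\eta(\bX)\,]$ and the analogous identity with $1-\eta(\bX)$ for $A=-1$, a point mass of the conditional law of $\eta(\bX)$ given $A=1$ at a value $t \in (0,1)$ forces $\mathrm{Pr}\{\eta(\bX)=t\}\cdot t > 0$, hence $\mathrm{Pr}\{\eta(\bX)=t\} > 0$; similarly for $A = -1$ (here one uses $t \in (0,1)$ so that the weights $t$ and $1-t$ are strictly positive, which is exactly why the trivial endpoints are excluded). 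So it suffices that the \emph{unconditional} law of $\eta(\bX)$ has no atoms in $(0,1)$ --- and that is precisely the hypothesis that $\mathrm{Pr}(A=1\mid\bX)$ is a continuous random variable with support $(0,1)$. I would spell out that ``continuous random variable'' is being used in the sense of having a non-atomic distribution (equivalently, a continuous CDF), so no assumption on the map $\bx \mapsto \eta(\bx)$ is needed, only on the induced law.

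The remaining step is to turn ``no atoms'' into genuine continuity of $se^*$ and $sp^*$ as functions of $\alpha$ on $(0,1)$. A monotone function (here $\alpha \mapsto F_{\eta\mid A}(1-\alpha)$, monotone because $F_{\eta\mid A}$ is) is continuous at a point iff the underlying CDF has no jump there; since the conditional CDF of $\eta(\bX)$ is continuous at every $t \in (0,1)$ by the previous paragraph, and $\alpha \in (0,1)$ corresponds to $1 - \alpha \in (0,1)$, both $se^*(\alpha) = 1 - F_{\eta\mid A=1}(1-\alpha)$ and $sp^*(\alpha) = F_{\eta\mid A=-1}(1-\alpha)$ are continuous on $(0,1)$. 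I expect the only real subtlety --- and thus the place to be careful rather than the place the proof is hard --- is the reduction from conditional to unconditional atoms, i.e.\ making sure the change-of-measure weights $\eta$ and $1-\eta$ are bounded away from $0$ on the relevant range, which is why the support condition and the restriction to $\alpha \in (0,1)$ both appear in the statement; everything else is bookkeeping with CDFs.
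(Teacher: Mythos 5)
Your proposal is correct and follows essentially the same route as the paper: both rewrite the Bayes classifier's event as a threshold on $\eta(\bX)=\mathrm{Pr}(A=1\mid\bX)$ (the paper states it via the likelihood ratio $\eta/(1-\eta) > (1-\alpha)/\alpha$, which is equivalent to $\eta > 1-\alpha$) and deduce continuity of $se^*$ and $sp^*$ from the nonatomicity of the law of $\eta(\bX)$. The only difference is that you spell out the conditional-to-unconditional reduction (that an atom of the law of $\eta(\bX)$ given $A=\pm1$ would force an unconditional atom, via the weights $\eta$ and $1-\eta$), a detail the paper's proof leaves implicit.
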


\noindent
Thus, $\mathrm{ROC}^*(u)$ is monotone nondecreasing and continuous except possibly at 0. 
It follows from Lemma~\ref{continuity} and Remark~\ref{true.opt.rmrk} 
that $\mathcal{R}_\alpha(f^*_\alpha)$ is continuous in $\alpha$. 
This is used in the proof of the following result, which is deferred to Appendix A. 

\noindent
\begin{thm}\label{risk.continuity}
Under the assumptions of Lemma~\ref{continuity}, 
$\mathcal{R}_\alpha\left(\widetilde{f}_\alpha\right)$, is continuous in $\alpha$. 
\end{thm}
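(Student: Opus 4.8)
The plan is to show continuity of $\alpha \mapsto \mathcal{R}_\alpha\left(\widetilde{f}_\alpha\right)$ by sandwiching it between two quantities that are already known to be continuous, namely $\mathcal{R}_\alpha(f^*_\alpha)$ (continuous by Lemma~\ref{continuity} and Remark~\ref{true.opt.rmrk}, as noted just before the statement) and a suitable upper bound built from a fixed near-optimal function. First I would fix $\alpha_0 \in (0,1)$ and a sequence $\alpha_n \to \alpha_0$, and aim to show $\mathcal{R}_{\alpha_n}\left(\widetilde{f}_{\alpha_n}\right) \to \mathcal{R}_{\alpha_0}\left(\widetilde{f}_{\alpha_0}\right)$. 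The key observation is that, using the expression from Remark~\ref{true.opt.rmrk}, for \emph{any} fixed $f \in \mathcal{F}$ the map $\alpha \mapsto \rho\alpha\{1 - se(f)\} + (1-\rho)(1-\alpha)\{1 - sp(f)\}$ is affine in $\alpha$, hence continuous, and moreover these maps are uniformly Lipschitz in $\alpha$ with constant bounded by $\max(\rho, 1-\rho) \le 1$ independently of $f$. Writing $G_\alpha(f)$ for this weighted error, we have $\mathcal{R}_\alpha(f) = G_\alpha(f)$ and $\widetilde{f}_\alpha = \arg\min_{f \in \mathcal{F}} G_\alpha(f)$, and the uniform Lipschitz bound gives $\left| G_{\alpha_n}(f) - G_{\alpha_0}(f)\right| \le |\alpha_n - \alpha_0|$ for all $f \in \mathcal{F}$.

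From here the argument is the standard one for continuity of an infimum under a uniformly convergent family. For the upper bound: $\mathcal{R}_{\alpha_n}\left(\widetilde{f}_{\alpha_n}\right) = G_{\alpha_n}\left(\widetilde{f}_{\alpha_n}\right) \le G_{\alpha_n}\left(\widetilde{f}_{\alpha_0}\right) \le G_{\alpha_0}\left(\widetilde{f}_{\alpha_0}\right) + |\alpha_n - \alpha_0| = \mathcal{R}_{\alpha_0}\left(\widetilde{f}_{\alpha_0}\right) + |\alpha_n - \alpha_0|$. For the lower bound, symmetrically, $\mathcal{R}_{\alpha_0}\left(\widetilde{f}_{\alpha_0}\right) = G_{\alpha_0}\left(\widetilde{f}_{\alpha_0}\right) \le G_{\alpha_0}\left(\widetilde{f}_{\alpha_n}\right) \le G_{\alpha_n}\left(\widetilde{f}_{\alpha_n}\right) + |\alpha_n - \alpha_0| = \mathcal{R}_{\alpha_n}\left(\widetilde{f}_{\alpha_n}\right) + |\alpha_n - \alpha_0|$. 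Combining, $\left|\mathcal{R}_{\alpha_n}\left(\widetilde{f}_{\alpha_n}\right) - \mathcal{R}_{\alpha_0}\left(\widetilde{f}_{\alpha_0}\right)\right| \le |\alpha_n - \alpha_0| \to 0$, which gives the claim; in fact this argument shows $\alpha \mapsto \mathcal{R}_\alpha\left(\widetilde{f}_\alpha\right)$ is Lipschitz on $(0,1)$, not merely continuous. I would then handle the endpoints $\alpha \in \{0, 1\}$ separately, where $\widetilde{f}_\alpha$ corresponds to the trivial constant classifier and the risk value is easily computed and matched to the one-sided limit, or simply note that the hypotheses restrict attention to the interior.

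The main subtlety — and where I would be most careful — is the existence and (non)uniqueness of $\widetilde{f}_\alpha$: the $\arg\min$ over $\mathcal{F}$ need not be unique, and for the sandwich argument one only needs \emph{some} minimizer at each $\alpha$, with the value $\mathcal{R}_\alpha\left(\widetilde{f}_\alpha\right) = \inf_{f \in \mathcal{F}} G_\alpha(f)$ being what is actually continuous; I would phrase the statement and proof in terms of this infimum value so that the choice of minimizer is immaterial. A second point worth a sentence is why the bound $G_{\alpha_n}\left(\widetilde{f}_{\alpha_0}\right) \le G_{\alpha_0}\left(\widetilde{f}_{\alpha_0}\right) + |\alpha_n - \alpha_0|$ is legitimate even though $\widetilde{f}_{\alpha_0}$ is not the minimizer at $\alpha_n$: this uses only the uniform Lipschitz bound on the affine family $\{G_\alpha(f)\}_\alpha$ evaluated at the single fixed function $\widetilde{f}_{\alpha_0}$, together with the facts $1 - se(f) \in [0,1]$ and $1 - sp(f) \in [0,1]$, so no optimality of that function at $\alpha_n$ is invoked. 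The continuity of $\mathcal{R}_\alpha(f^*_\alpha)$ inherited from Lemma~\ref{continuity} is not strictly needed for this particular argument, but it is consistent with and reinforces the conclusion.
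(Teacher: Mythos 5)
Your proof is correct and follows essentially the same route as the paper's: both arguments compare $\mathcal{R}_{\alpha_n}\left(\widetilde{f}_{\alpha_n}\right)$ with the risk of the competing minimizer taken at the other parameter value and use that $\mathcal{R}_\alpha(f)$ is affine (hence continuous) in $\alpha$ for fixed $f$; the paper phrases this as a $\limsup$/$\liminf$ contradiction, whereas you run the sandwich directly. Your explicit uniform bound $\left|G_{\alpha}(f)-G_{\alpha'}(f)\right|\le|\alpha-\alpha'|$ for all $f$ is a nice refinement: it makes rigorous the uniformity in $f$ that the paper's $\liminf$ step uses implicitly, yields that $\alpha\mapsto\mathcal{R}_\alpha\left(\widetilde{f}_\alpha\right)$ is in fact Lipschitz, and, as you observe, shows that the continuity assumption on $\mathrm{Pr}(A=1\mid\bX)$ from Lemma~\ref{continuity} is not actually needed for this conclusion.
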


Finally, we state two corollaries pertaining to the sensitivity and specificity 
of the estimated decision rule. These results show that 
the ROC curve of the estimated decision function converges uniformly to the ROC curve 
of the optimal decision function in $\mathcal{F}$. 
The proof of Corollary~\ref{cor2} relies on a novel empirical process result 
which is included in Appendix A. 
\begin{cor} \label{cor1}
Let $\widehat{f}_\alpha$ be estimated using a linear, quadratic, polynomial, 
or Gaussian kernel function.
Let $se\left(\widehat{f}_\alpha\right) = \mathrm{Pr}\left\{\widehat{f}_\alpha(\bX) > 0 | A = 1 \right\}$ 
be the sensitivity 
and $sp\left(\widehat{f}_\alpha\right) = \mathrm{Pr}\left\{\widehat{f}_\alpha(\bX) < 0 | A = -1 \right\}$ 
be the specificity of the decision rule 
$\widehat{d}_\alpha = \mathrm{sign}\left(\widehat{f}_\alpha\right)$. 
Then, there exist $\widetilde{se}_\alpha$ and $\widetilde{sp}_\alpha$ such that 
$\alpha \rho \widetilde{se}_\alpha + (1 - \alpha)(1 - \rho) \widetilde{sp}_\alpha 
 = \alpha \rho se\left(\widetilde{f}_\alpha\right) + (1 - \alpha)(1 - \rho) sp\left(\widetilde{f}_\alpha \right)$ 
and 
$\mathrm{sup}_{\alpha \in [0, 1]} \left| se\left(\widehat{f}_\alpha\right)
- \widetilde{se}_\alpha \right| \xrightarrow[]{P} 0$
and 
$\mathrm{sup}_{\alpha \in [0, 1]} \left| sp\left(\widehat{f}_\alpha\right)
- \widetilde{sp}_\alpha \right| \xrightarrow[]{P} 0$ as $n \rightarrow \infty$. 
\end{cor}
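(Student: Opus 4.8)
The plan is to combine the uniform consistency of the $\phi$-risk (Theorem~\ref{unifcons}), the Fisher consistency result (Theorem~\ref{fisher}), and the continuity of $\mathcal{R}_\alpha(\widetilde{f}_\alpha)$ in $\alpha$ (Theorem~\ref{risk.continuity}), together with the Glivenko--Cantelli property from Lemma~\ref{gc}, to control the excess risk uniformly in $\alpha$ and then translate that into control on sensitivity and specificity. First I would note that by Lemma~\ref{excessrisk} and the argument preceding Theorem~\ref{unifcons} (namely that $L_{\alpha,\phi}(\widehat f_\alpha)$ lies in a GC class indexed by $\alpha$), we already have $\sup_{\alpha\in[0,1]}\bigl\{\mathcal{R}_\alpha(\widehat f_\alpha)-\mathcal{R}_\alpha^*\bigr\}\xrightarrow{P}0$, hence, since $f^*_{\alpha,\phi}\in\mathcal{F}\subseteq\bar{\mathcal H}_k$ is assumed, also $\sup_\alpha\bigl|\mathcal{R}_\alpha(\widehat f_\alpha)-\mathcal{R}_\alpha(\widetilde f_\alpha)\bigr|\xrightarrow{P}0$. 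Writing $\mathcal{R}_\alpha(f)=\rho\alpha\{1-se(f)\}+(1-\rho)(1-\alpha)\{1-sp(f)\}$ from Remark~\ref{true.opt.rmrk}, this says exactly that the weighted sum $\alpha\rho\, se(\widehat f_\alpha)+(1-\alpha)(1-\rho)\, sp(\widehat f_\alpha)$ converges uniformly in $\alpha$ to $\alpha\rho\, se(\widetilde f_\alpha)+(1-\alpha)(1-\rho)\, sp(\widetilde f_\alpha)$.

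The weighted-sum convergence alone does not pin down $se$ and $sp$ separately, which is why the corollary only claims the existence of \emph{some} pair $(\widetilde{se}_\alpha,\widetilde{sp}_\alpha)$ achieving the same weighted value and then asserts the two suprema vanish. The idea I would pursue is to construct $(\widetilde{se}_\alpha,\widetilde{sp}_\alpha)$ directly as the sensitivity and specificity of an appropriate surrogate decision rule — for instance by perturbing $\widehat f_\alpha$, or by taking a subsequential ROC-curve limit — chosen so that (i) it lies on the ROC curve of the population-optimal rule in $\mathcal{F}$, forcing the weighted-sum identity, and (ii) it is uniformly close to $\bigl(se(\widehat f_\alpha),sp(\widehat f_\alpha)\bigr)$. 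Concretely, since $\{\widehat f_\alpha:\alpha\in[0,1]\}$ is GC (Lemma~\ref{gc}) and the class of halfspaces/sublevel sets $\{f>0\}$ it generates is Donsker-type, the empirical sensitivity and specificity converge uniformly to their population counterparts $se(\widehat f_\alpha)$, $sp(\widehat f_\alpha)$; coupling this with the uniform weighted-sum convergence and the continuity/monotonicity of $\mathrm{ROC}^*$ from Lemma~\ref{continuity} and Theorem~\ref{risk.continuity} should identify the limiting pair and give the two stated uniform convergences.

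The main obstacle is the non-identifiability of $(se,sp)$ from the weighted sum: along a sequence of $\alpha$ values where the ROC curve is flat (the weighted objective has a non-unique optimizer, e.g.\ on a vertical or horizontal segment of $\mathrm{ROC}^*$), the estimator $\widehat f_\alpha$ may oscillate between endpoints of that segment, so $se(\widehat f_\alpha)$ need not converge to $se(\widetilde f_\alpha)$ for the particular selection $\widetilde f_\alpha$. This is precisely what forces the weaker conclusion with an $\alpha$-dependent surrogate $(\widetilde{se}_\alpha,\widetilde{sp}_\alpha)$ rather than $(se(\widetilde f_\alpha),sp(\widetilde f_\alpha))$, and the delicate part of the argument is showing that one can always choose such a surrogate \emph{measurably and uniformly} in $\alpha$ — this is where the "novel empirical process result" mentioned for Corollary~\ref{cor2} presumably does the heavy lifting, controlling the fluctuation of the set $\{\widehat f_\alpha>0\}$ uniformly over $\alpha$. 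I would handle it by first establishing the result on the (relatively open) set of $\alpha$ at which the population weighted objective has a unique optimizer, where $se(\widehat f_\alpha)\to se(\widetilde f_\alpha)$ directly, and then extending across the flat portions by a continuity/sandwiching argument using that $se$ and $sp$ are monotone in $\alpha$ and that $\mathrm{ROC}^*$ is continuous except possibly at $0$.
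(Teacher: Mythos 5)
Your reduction to uniform convergence of the weighted sum $\alpha\rho\, se\left(\widehat{f}_\alpha\right) + (1-\alpha)(1-\rho)\, sp\left(\widehat{f}_\alpha\right)$ is fine, and you have correctly identified the central obstacle: the weighted sum does not identify sensitivity and specificity separately. But your proposed resolution does not close that gap, and it is not how the paper closes it. The targets $\widetilde{se}_\alpha$ and $\widetilde{sp}_\alpha$ in the corollary are deterministic (they do not depend on $n$ or the data), so if, as in the scenario you describe, $\widehat{f}_\alpha$ could genuinely oscillate between two rules with different sensitivities on a flat portion of the ROC curve, then $se\left(\widehat{f}_\alpha\right)$ would not converge in probability to \emph{any} fixed $\widetilde{se}_\alpha$, and no amount of careful ``measurable selection,'' perturbation of $\widehat{f}_\alpha$, or sandwiching across flat segments can rescue the statement. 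The only way to prove the corollary is to rule out the oscillation, and your sketch contains no mechanism for doing so. You also misattribute the heavy lifting: Lemma~\ref{ep.lem} (the ``novel empirical process result'') is a Glivenko--Cantelli preservation lemma used in Corollary~\ref{cor2} to pass from empirical to true sensitivity/specificity of $\widehat{f}_\alpha$; it has nothing to do with the population-level identification problem in Corollary~\ref{cor1}.

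The paper's key idea, which is absent from your proposal, is to work with the surrogate $\phi$-risk rather than the $0$--$1$ risk at this step: $\mathbb{E} L_{\alpha,\phi}(f)$ is convex in $f$ and $\mathcal{F}$ is convex, so the population $\phi$-risk minimizer $\widetilde{f}_{\alpha,\phi} = \operatorname*{arg\,min}_{f \in \mathcal{F}} \mathbb{E} L_{\alpha,\phi}(f)$ is unique. Combining the GC property from Lemma~\ref{gc} with an argmin-consistency theorem for M-estimators (Lemma~14.3 and Theorem~2.12 of \cite{kosorok2008introduction}) then gives convergence of the decision function itself, $\sup_{x} \left| \widehat{f}_\alpha(x) - \widetilde{f}_{\alpha,\phi}(x) \right| = o_P(1)$, which precludes the oscillation and yields $\left| se\left(\widehat{f}_\alpha\right) - se\left(\widetilde{f}_{\alpha,\phi}\right)\right| = o_P(1)$ and likewise for specificity. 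One then sets $\widetilde{se}_\alpha = se\left(\widetilde{f}_{\alpha,\phi}\right)$, $\widetilde{sp}_\alpha = sp\left(\widetilde{f}_{\alpha,\phi}\right)$; the weighted-sum identity follows because $\widetilde{f}_{\alpha,\phi}$ attains the same optimal weighted risk as $\widetilde{f}_\alpha$ (Theorem~\ref{cons} together with Remark~\ref{true.opt.rmrk}), and uniformity in $\alpha$ is obtained via Theorem~\ref{unifcons}. Without the convexity/uniqueness step and the resulting sup-norm convergence of $\widehat{f}_\alpha$, your argument cannot deliver the two separate uniform convergences claimed in the corollary.
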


Note that Corollary~\ref{cor1} does not require $\widetilde{f}_\alpha$ 
to be unique. We can only say that the sensitivity and specificity of 
$\widehat{f}_\alpha$ converge to the sensitivity and specificity of 
a function in the same equivalence class as $\widetilde{f}_\alpha$, 
i.e., a function with optimal risk. 

\noindent
\begin{cor} \label{cor2}
Let $\widehat{se}\left(\widehat{f}_\alpha\right)$
and $\widehat{sp}\left(\widehat{f}_\alpha\right)$ 
be defined as in Section~\ref{roc.setup}. 
Let $\widetilde{se}_\alpha$ and $\widetilde{sp}_\alpha$ 
be as defined in Corollary~\ref{cor1}. Then, 
$\mathrm{sup}_{\alpha \in [0, 1]} \left| \widehat{se}\left(\widehat{f}_\alpha\right)
- \widetilde{se}_\alpha \right| \xrightarrow[]{P} 0$, 
and $\mathrm{sup}_{\alpha \in [0, 1]} \left| \widehat{sp}\left(\widehat{f}_\alpha\right)
- \widetilde{sp}_\alpha \right| \xrightarrow[]{P} 0$ as $n \rightarrow \infty$. 
\end{cor}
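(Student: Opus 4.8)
The plan is to combine Corollary~\ref{cor1} with a uniform (in $\alpha$) law of large numbers for the empirical quantities $\widehat{se}(\widehat f_\alpha)$ and $\widehat{sp}(\widehat f_\alpha)$. By the triangle inequality,
\[
\sup_{\alpha\in[0,1]}\bigl|\widehat{se}(\widehat f_\alpha)-\widetilde{se}_\alpha\bigr|
\le \sup_{\alpha\in[0,1]}\bigl|\widehat{se}(\widehat f_\alpha)-se(\widehat f_\alpha)\bigr|
+\sup_{\alpha\in[0,1]}\bigl|se(\widehat f_\alpha)-\widetilde{se}_\alpha\bigr|,
\]
and the second term vanishes in probability by Corollary~\ref{cor1}, so it suffices to control the first term (and its specificity analogue). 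Writing $\widehat{se}(\widehat f_\alpha)=\mathbb{E}_n\bigl\{1(A=1)1(\widehat f_\alpha(\bX)>0)\bigr\}\big/\mathbb{E}_n 1(A=1)$ and $se(\widehat f_\alpha)=\mathrm{Pr}(\widehat f_\alpha(\bX)>0\mid A=1)$, I would first handle the denominator: $\mathbb{E}_n 1(A=1)\xrightarrow{P}\rho>0$, so by the continuous mapping theorem it is enough to show the numerator converges uniformly in $\alpha$.

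The numerator is $\mathbb{E}_n g_\alpha$ with $g_\alpha(\bx,a)=1(a=1)1(\widehat f_\alpha(\bx)>0)$, where $\widehat f_\alpha$ is itself random. The key step is the novel empirical process result referenced just before the statement: since $\{\widehat f_\alpha:\alpha\in[0,1]\}$ is contained in a GC class (Lemma~\ref{gc}), the class of indicators $\{1(f(\bx)>0):\alpha\}$ obtained from thresholding these functions — more carefully, the collection of sets $\{\bx:\widehat f_\alpha(\bx)>0\}$, which for linear/polynomial/Gaussian kernels are halfspaces or sublevel sets of a fixed finite-dimensional family of polynomials — forms a VC class (hence Glivenko--Cantelli), and intersecting with $\{a=1\}$ preserves this (Corollary~9.27 of \cite{kosorok2008introduction}). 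Consequently
\[
\sup_{\alpha\in[0,1]}\bigl|\mathbb{E}_n g_\alpha-\mathbb{E}\,g_\alpha\bigr|\xrightarrow{P}0,
\]
where $\mathbb{E}\,g_\alpha=\rho\cdot se(\widehat f_\alpha)$ is understood conditionally on the data through $\widehat f_\alpha$ (so this is really a uniform GC statement over the full index set of all possible decision functions in the relevant VC class, applied at the random point $\widehat f_\alpha$). Combining the numerator and denominator control via Slutsky gives $\sup_\alpha|\widehat{se}(\widehat f_\alpha)-se(\widehat f_\alpha)|\xrightarrow{P}0$, and the identical argument with $1(A=-1)$ and $1(\widehat f_\alpha(\bX)<0)$ handles specificity. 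The corollary then follows from Corollary~\ref{cor1}.

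The main obstacle is the empirical-process step: one cannot simply apply a Glivenko--Cantelli theorem to the single random function $\widehat f_\alpha$, because it depends on the same sample over which $\mathbb{E}_n$ is taken. The fix — which I expect is exactly the content of the ``novel empirical process result'' in Appendix~A — is to pass to a fixed class large enough to contain $\{\widehat f_\alpha:\alpha\in[0,1]\}$ for all $n$ with probability tending to one (the linear/quadratic/polynomial case gives a fixed finite-dimensional space; the Gaussian-kernel case requires the norm bound $\lambda_n\|\widehat f_\alpha\|^2 \le$ (bounded hinge loss) to confine $\widehat f_\alpha$ to a GC-class of RKHS balls, uniformly in $\alpha$ since $\lambda_n$ does not depend on $\alpha$), prove the uniform GC property for that fixed class of subgraphs/sublevel sets, and then evaluate at $\widehat f_\alpha$. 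Verifying the VC dimension bound for the relevant family of sublevel sets (and checking uniform measurability in $\alpha$, which follows from the continuity/GC structure already established) is the only genuinely technical piece; everything else is Slutsky and the triangle inequality.
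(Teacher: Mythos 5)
Your skeleton matches the paper's: the same triangle-inequality decomposition into (i) a denominator term handled by $\mathbb{E}_n 1(A=1)\xrightarrow{P}\rho>0$ and continuous mapping/Slutsky, (ii) a centered term $\sup_\alpha\left|(\mathbb{E}_n-\mathbb{E})\,\rho^{-1}1\{\widehat{f}_\alpha(\bX)>0\}1(A=1)\right|$, and (iii) the term controlled by Corollary~\ref{cor1}. Pieces (i) and (iii) are fine as you argue them. The gap is in piece (ii), which you dismiss as routine: you claim the class of indicators $\left\{1\{f(\bx)>0\}\right\}$ coming from the estimated decision functions is a VC (hence GC) class. That is true for the linear, quadratic, and polynomial kernels, where the decision functions live in a fixed finite-dimensional space, but it is not justified — and is false in general — for the Gaussian kernel, which Corollary~\ref{cor1} (and hence Corollary~\ref{cor2}) explicitly covers. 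Sign classes of Gaussian-RKHS balls have infinite VC dimension (with well-separated points one can shatter arbitrarily many of them using small-coefficient combinations of bumps), and, more fundamentally, the GC property of the function class $\left\{\widehat{f}_\alpha\right\}$ established in Lemma~\ref{gc} does not transfer to the class of indicators $1\{f>0\}$, because the indicator is not a continuous transformation; none of the preservation results you invoke (Corollary~9.27 of Kosorok) apply to it.

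The paper closes exactly this hole differently: it replaces the indicator by the continuous ramp $g_\epsilon(x)=\epsilon^{-1}(x^+\wedge\epsilon)$, proves a new preservation result (Lemma~\ref{ep.lem}: composing a GC class with a bounded continuous $g$ having limits at $\pm\infty$ yields a GC class), applies it to $g_\epsilon\{\widehat{f}_\alpha(\bX)\}1(A=1)$ to get a uniform-in-$\alpha$ $o_P(1)$ bound, and then lets $\epsilon\downarrow 0$ to absorb the discrepancy between $g_\epsilon$ and the indicator into a $2\epsilon$ term. Your instinct to pass to a fixed class containing $\left\{\widehat{f}_\alpha:\alpha\in[0,1]\right\}$ (to deal with the data-dependence of $\widehat{f}_\alpha$) is correct and is also what the paper does via Lemma~\ref{gc}, but without the smoothing step (or some substitute argument for why the relevant level-set class is GC under $P$), your proof only covers the finite-dimensional kernels, not the Gaussian case that the statement is meant to include.
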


\section{Simulation Experiments} \label{roc.simul}

To investigate the performance of classification using a weighted SVM 
and the resulting ROC curves and confidence bands, we use the following generative model. 
Let $\bX$ be generated according to $\bX \sim N_p(\mu \bZ, \sigma^2 I)$, where 
$\bZ$ is equal to a vector of ones with probability $q$ and a vector 
of negative ones with probability $1 - q$ and $I$ is a $p \times p$ identity matrix.  
Thus, $\bX$ is a mixture of multivariate normal distributions with mixing probability $q$. 
Let $\pi(\bX) = \mathrm{expit}\left(\bX^\intercal \beta\right)$ for a $p \times 1$ vector $\beta$, 
where $\mathrm{expit}(u) = \mathrm{exp}(u) / \left\{1 + \mathrm{exp}(u)\right\}$. 
Given $\bX$, we let $A$ be equal to 1 with probability $\pi(\bX)$ and $-1$ with probability 
$1 - \pi(\bX)$. Because $\pi(\bX)$ depends on $\bX$ only through a linear function of $\bX$, 
we refer to this model below as the linear generative model. 
We also consider a generalization of the above model where 
$\pi(\bX) = \mathrm{expit}\left(\bX^\intercal \beta + X_1^2 + X_2^2 + 4 X_1 X_2\right)$, 
which we refer to below as the nonlinear generative model. 

We implement the weighted SVM in MATLAB software using 
the LIBSVM library of \cite{chang2011libsvm}. 
Each simulated data set is divided into training and testing sets with 
70\% of the data used for training the SVM and 30\% used to estimate sensitivity and 
specificity. We use both linear and Gaussian kernels. 
The Gaussian kernel function is 
$k(\bx, \mathbf{y}) = \mathrm{exp}(-\gamma \|\bx-\mathbf{y}\|^2 )$
\citep[][]{steinwart2008support}. 
The bandwidth parameter, $\gamma$, and the penalty parameter, $\lambda_n$, are 
estimated using cross-validation within the training data for $\alpha = 0.5$ and the resulting 
tuning parameters are used to fit the weighted SVM for all $\alpha$
on a grid over $(0, 1)$. 
Comparison methods are implemented in R software \citep[][]{rcoreteam}. 

We compare the performance of the weighted SVM 
to standard methods in diagnostic medicine, including 
logistic regression \citep[][]{mcintosh2002combining} 
and semiparametric ROC curves \citep[][]{cai2004semi}. 
Logistic regression and the SVM combine multiple 
biomarkers while the semiparametric ROC curve is calculated 
for a single biomarker (the first component of $\bX$). 
These four methods are applied to simulated data from the 
linear and nonlinear generative models with 
$n = 250, 500$, $p = 2, 5, 10$, $q = 0.05, 0.25$, 
$\sigma = 0.75$, and $\mu = 0.25$. When $p = 2, 5$, we use $\beta = (2, 1)^\intercal$ 
and $\beta = (2, 1, \ldots, 1)^\intercal$, respectively. 
When $p = 10$, we use $\beta = (2, 1, 1, 1, 1, 0, \ldots, 0)^\intercal$, 
i.e., noise variables are introduced for the case where $p = 10$. 
We report the mean area under the ROC curve (AUC) 
and the Monte Carlo standard deviation of AUC 
as well as optimal sensitivity and specificity 
across 100 replications. 
Optimal sensitivity and specificity are calculated as the point 
on the ROC curve closest to $(0, 1)$ in Euclidean distance 
\citep[see][for a discussion of different methods 
for selecting the optimal point on the ROC curve]{lopez2014optimalcutpoints}. 
 
Table~\ref{sim1.auc.nonlin} below contains estimated AUCs 
averaged across replications and Monte Carlo standard 
deviations of AUCs for the four methods 
when the true generative model is nonlinear. 
\begin{table}[h!]
\caption{Average AUC when true model is nonlinear.} 
\label{sim1.auc.nonlin}
\centering
\begin{tabular}{ccc|cccc}
   \hline
$n$ & $p$ & $q$ & Linear SVM & Gaussian SVM & Logistic & Semiparametric \\ 
   \hline
250 & 2 & 0.05 & 0.61 (0.07) & 0.78 (0.06) & 0.58 (0.08) & 0.58 (0.04) \\ 
   &  & 0.25 & 0.64 (0.07) & 0.81 (0.05) & 0.62 (0.06) & 0.62 (0.03) \\ 
   & 5 & 0.05 & 0.71 (0.06) & 0.75 (0.06) & 0.71 (0.07) & 0.56 (0.03) \\ 
   &  & 0.25 & 0.74 (0.05) & 0.77 (0.06) & 0.74 (0.06) & 0.62 (0.03) \\ 
   & 10 & 0.05 & 0.70 (0.06) & 0.56 (0.05) & 0.70 (0.06) & 0.57 (0.04) \\ 
   &  & 0.25 & 0.74 (0.06) & 0.56 (0.05) & 0.74 (0.06) & 0.62 (0.04) \\ 
   \hline
500 & 2 & 0.05 & 0.61 (0.05) & 0.81 (0.04) & 0.59 (0.05) & 0.58 (0.02) \\ 
   &  & 0.25 & 0.65 (0.04) & 0.81 (0.04) & 0.61 (0.05) & 0.61 (0.02) \\ 
   & 5 & 0.05 & 0.72 (0.04) & 0.78 (0.04) & 0.72 (0.05) & 0.57 (0.02) \\ 
   &  & 0.25 & 0.77 (0.04) & 0.80 (0.04) & 0.75 (0.03) & 0.62 (0.02) \\ 
   & 10 & 0.05 & 0.71 (0.04) & 0.60 (0.05) & 0.71 (0.04) & 0.56 (0.03) \\ 
   &  & 0.25 & 0.75 (0.04) & 0.60 (0.04) & 0.74 (0.04) & 0.62 (0.02) \\ 
   \hline
\end{tabular}
\end{table}
The Gaussian SVM outperforms the other methods except in the case 
where there are noise variables. The linear SVM slightly 
outperforms logistic regression in most cases.
Table~\ref{sim1.sesp.nonlin} in Appendix B contains optimal 
sensitivities and specificities for the four methods
when the true generative model is nonlinear, averaged across replications.
Table~\ref{sim1.sesp.nonlin.uw} contains estimated sensitivities and specificities 
of an unweighted SVM when the true model is nonlinear.
The unweighted SVM often fails to achieve a balance between sensitivity and 
specificity. In particular, the linear SVM often achieves low specificity. 
The imbalance between sensitivity and specificity is often worse 
when $q$ is small, indicating that proper balance is difficult 
to achieve when there is an imbalance between true class labels 
in the data. These results highlight the importance 
of estimating the full ROC curve and selecting the weight 
to achieve the desired balance between sensitivity and specificity; 
unweighted classification may not achieve satisfactory performance in many settings. 
Tables~\ref{sim1.auc.lin}, \ref{sim1.sesp.lin}, and \ref{sim1.sesp.lin.uw} in 
Appendix B contain results when the true generative model is linear. 

Next, we examine the performance of the proposed bootstrap confidence band method 
for the linear SVM. 
Independent testing sets of size 100,000 were used to calculate 
$se\left(\widehat{f}_\alpha\right)$ and $sp\left(\widehat{f}_\alpha\right)$, giving us 
an approximation to the true ROC curve for each $\widehat{f}_\alpha$. 
The method introduced in Section~\ref{roc.conf.bands} was used to 
construct 90\% confidence bands using 1000 bootstrap samples.  
We report the proportion of 100 Monte Carlo replications for which the 
true ROC curve is fully contained within the confidence band across [0.01, 0.99] along with 
the average area between the upper and lower confidence bands. 
Table~\ref{tab.cov.prob} contains these results. 
\begin{table}[h!]
\caption{Estimated coverage probabilities and area between confidence band curves.} 
\label{tab.cov.prob}
\centering
\begin{tabular}{ccc|cccc}
  \hline
   & & & \multicolumn{2}{c}{Coverage probability} & \multicolumn{2}{c}{Area between curves} \\$n$ & $p$ & $q$ & Linear model & Nonlinear model & Linear model & Nonlinear model \\ 
   \hline
250 & 2 & 0.05 & 0.92 & 0.89 & 0.31 & 0.36 \\ 
   &  & 0.25 & 0.93 & 0.89 & 0.29 & 0.37 \\ 
   & 5 & 0.05 & 0.91 & 0.93 & 0.27 & 0.38 \\ 
   &  & 0.25 & 0.83 & 0.97 & 0.25 & 0.37 \\ 
   & 10 & 0.05 & 0.88 & 0.93 & 0.29 & 0.39 \\ 
   &  & 0.25 & 0.64 & 0.90 & 0.26 & 0.37 \\ 
   \hline
500 & 2 & 0.05 & 0.97 & 0.94 & 0.24 & 0.27 \\ 
   &  & 0.25 & 0.97 & 0.91 & 0.22 & 0.27 \\ 
   & 5 & 0.05 & 0.95 & 0.97 & 0.20 & 0.29 \\ 
   &  & 0.25 & 0.94 & 0.94 & 0.18 & 0.26 \\ 
   & 10 & 0.05 & 0.94 & 0.93 & 0.22 & 0.29 \\ 
   &  & 0.25 & 0.87 & 0.97 & 0.19 & 0.27 \\ 
   \hline
\end{tabular}
\end{table}
We observe that, across $n$, $p$, and $q$, the proposed quantile bootstrap method
provides approximately 90\% coverage with the area between curves decreasing for larger sample sizes. 

Figure~\ref{roc.cb} below contains bootstrap confidence bands for one simulated replication for 
the linear and nonlinear generative model when $n = 500$, $p = 2$, and $q = 0.25$. 
The true ROC curve, calculated from a large testing set of size 100,000, is also plotted. 
\begin{figure}[h!]
\centering
\subfigure{\includegraphics[width=.45\linewidth]{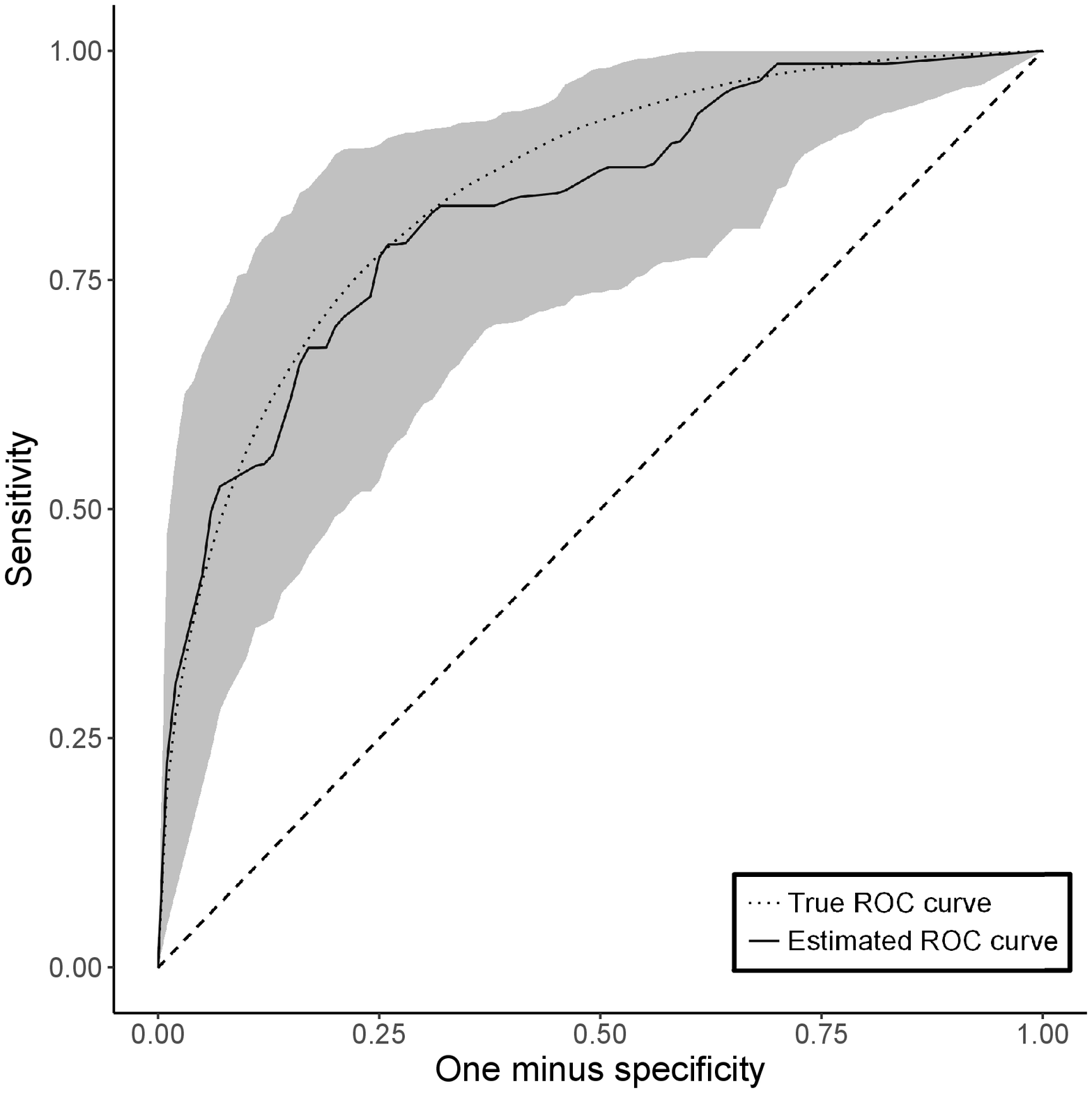}}%
\hfill
\subfigure{\includegraphics[width=.45\linewidth]{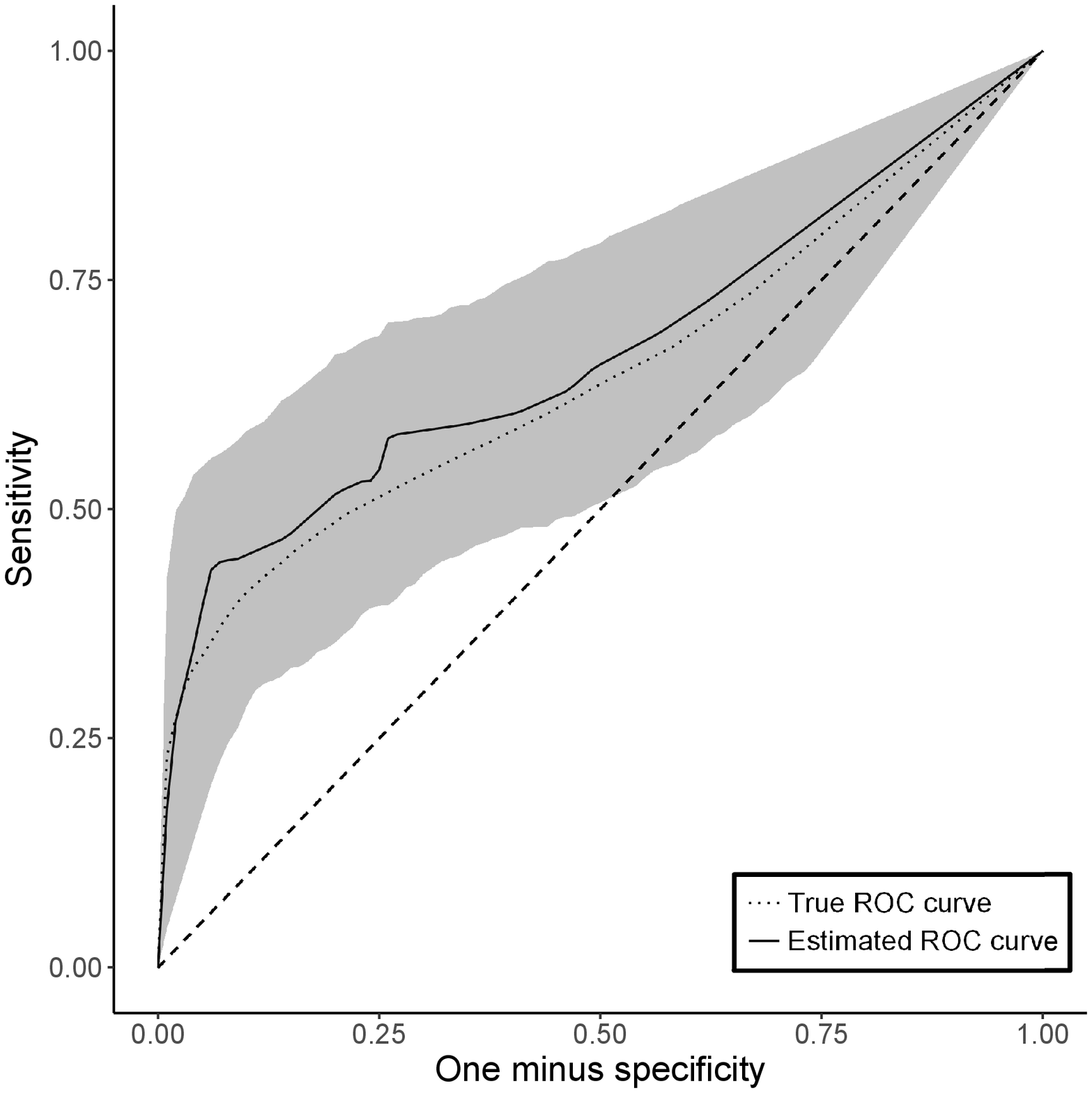}}%
\hfill
\caption{ROC curves and confidence bands for $n = 500$, $p = 2$, and $q = 0.25$, when 
the true model is linear (left) and nonlinear (right).}
\label{roc.cb}
\end{figure}
%
%
These figures demonstrate that the proposed quantile bootstrap produces confidence bands that 
capture the true ROC curve and are sufficiently narrow as to provide useful inference about 
the future performance of an estimated SVM classifier. 

\section{Applications to Data} \label{roc.data}

\subsection{Breast Cancer Genomics}

We apply the weighted SVM to the problem of predicting treatment response among patients with breast 
cancer. The full data consist of 323 patients with complete data. 
For each patient, we calculated a collection of 512 gene expression signatures, 
called modules, each of which is a function of patient gene expression data, 
which can be used to predict response to neoadjuvant chemotherapy \citep[][]{fan2011building}. 
We also observe a variety of clinical variables, e.g., age and tumor stage. 
Figure~\ref{FIG:mda} contains ROC curves for predicting response 
to treatment using the linear and Gaussian SVM, logistic regression with LASSO penalty 
\citep[][]{tibshirani1996regression}, and 
random forests \citep[][]{breiman2001random}, along with confidence bands for the linear SVM. 
\begin{figure}[h!]
\centering
\includegraphics[scale = 0.5]{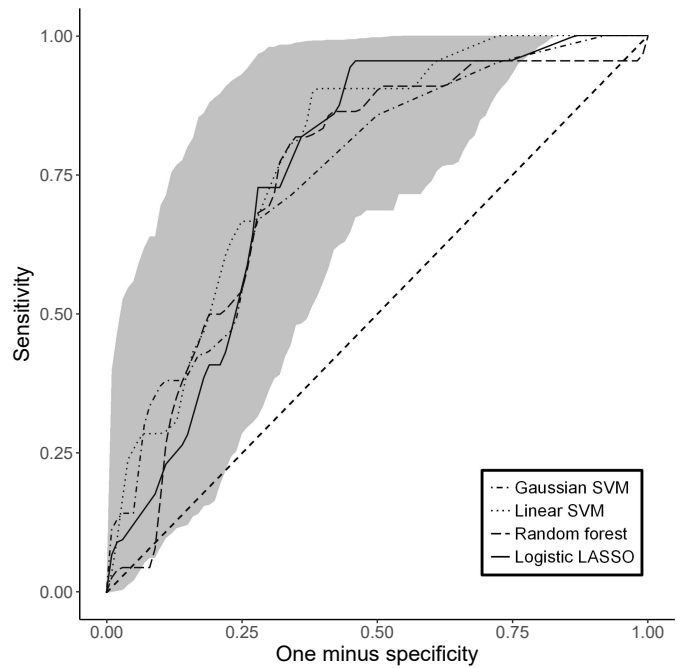}
\caption{ROC curves for predicting response to treatment among breast cancer patients.}
\label{FIG:mda}
\end{figure} 
Each method performs equally well, with each ROC curve falling within the 
confidence bands for the linear SVM.  Table~\ref{mda.tab} contains 
AUC and optimal sensitivity and specificity for each method 
along with the sensitivity and specificity of the unweighted versions 
of each method. 
\begin{table}[h!]
\caption{Comparison of methods applied to breast cancer data.} 
\label{mda.tab}
\centering
\scalebox{0.9}{
\begin{tabular}{c|ccccc}
   \hline
Method & AUC & $\widehat{se}$ (optimal) & $\widehat{sp}$ (optimal) & $\widehat{se}$ (unweighted) & $\widehat{sp}$ (unweighted) \\ 
   \hline
Gaussian SVM & 0.74 & 0.67 & 0.72 & 0.10 & 1.00 \\ 
  Linear SVM & 0.79 & 0.90 & 0.63 & 0.19 & 0.97 \\ 
  Random forest & 0.74 & 0.82 & 0.65 & 0.05 & 0.99 \\ 
  Logistic LASSO & 0.75 & 0.73 & 0.72 & 0.00 & 1.00 \\ 
   \hline
\end{tabular}
}
\end{table}
On these data, the linear SVM achieves the best AUC. Each method achieves a better 
balance between sensitivity and specificity after proper weighting. 
Unweighted classification results in close to perfect specificity 
at the expense of very low sensitivity for each method. 
This is likely due to the imbalance in the data 
(only 22\% of patients in the sample respond). 

\subsection{Diagnosis of Infant Hepatitis C}

We also applied the proposed methods to data from 
the cohort study of mother-to-infant hepatitis C transmission of \cite{shebl2009prospective}. 
In this study, 1863 mother-infant pairs in three Egyptian villages were studied to assess 
risk factors for vertical transmission of hepatitis C virus (HCV). 
Of this sample, 33 infants were positive for both HCV RNA and HCV antibodies at the end of the study. 
We use data from infant follow-up visits 
at 2-4 months and 10-12 months. At each follow-up visit, infants 
were tested for HCV RNA using a polymerase chain reaction (PCR) test and 
HCV antibodies using an enzyme-linked immunosorbent assay (ELISA) test. 
Mothers in the study were also tested for HCV RNA and antibodies during pregnancy. 
In pediatric infectious diseases, it is important to correctly 
diagnose infected infants so that they will be retained in care for subsequent treatment. 
A test with high specificity is also important, as this allows for quickly and reliably 
reassuring families that their child is not infected and needs no further care.  
We use a weighted SVM to estimated a classifier based on the mother's 
test results during pregnancy and infant's test results at 2-4 months. 
While a PCR test at 2-4 months detects HCV viremia, it cannot predict which 
children subsequently become chronically infected, and a PCR test
at 10-12 months remains the gold standard.

In this study, the PCR test achieved a sensitivity of 0.4167 and a specificity of 0.9911. 
The ELISA test achieved a sensitivity of 0.5833 and a specificity of 0.9571. 
Due to a variety of factors, diagnosis during the early months of life 
is difficult. Both PCR and ELISA suffer from low sensitivity 
at 2-4 months for detecting which infants will become chronically infected later. 
It is of interest to see if diagnosis via a weighted SVM can provide even a modest improvement in performance 
thereby reducing the need for a repeat test after 10-12 months of age.
 
We apply the weighted SVM and evaluate performance using 5-fold cross validation. 
Averaging the estimated sensitivity and specificity 
for each value of $\alpha$ over the 5 folds yields the ROC curve 
found in Figure~\ref{FIG:data2}, plotted with bootstrap confidence bands. 
We plot the sensitivity and specificity of the individual 
PCR and ELISA tests as points in the figure. 
\begin{figure}[h!]
\centering
\includegraphics[scale = 0.5]{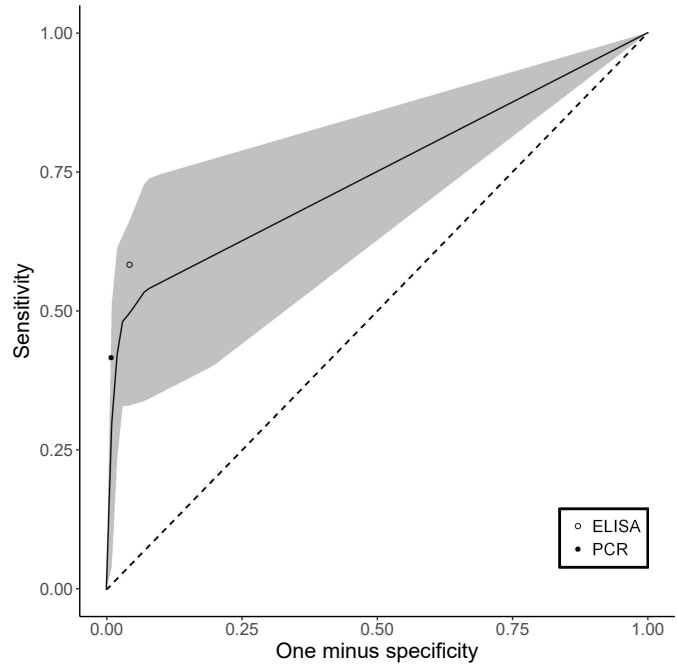}
\caption{ROC curve and confidence band for diagnosis of infant HCV using linear SVM.}
\label{FIG:data2}
\end{figure}
The closest point on the ROC curve to $(0, 1)$ yields an estimated 
sensitivity of 0.6011 and an estimated specificity of 0.8000, which 
provides increased sensitivity and a better balance between sensitivity and specificity 
when compared to the usual diagnostic tests.  
Classification is difficult due to the imbalance of infections and non-infections in the data, 
but a weighted SVM provides increased performance compared to either diagnostic test available. 

\section{Conclusion} \label{roc.conc}

A wide variety of problems in biomedical decision making can be 
expressed as classification problems, such as diagnosing disease 
and predicting response to treatment. In some clinical applications, 
false positives may have very different consequences from false 
negatives; classification methods which can properly weight 
sensitivity and specificity and estimate the optimal ROC curve are needed, 
along with inference methods for the ROC curve. 
Estimating the optimal ROC curve using a weighted SVM has 
been considered by \cite{veropoulos1999controlling}. We have established 
the theoretical justification for estimating the ROC curve with a weighted 
SVM, demonstrated its performance in simulation studies, and provided a 
bootstrap confidence band method for the SVM ROC curve. 

The applications of the weighted SVM in diagnostic medicine are numerous. 
We have demonstrated, for example, that this method can be 
used to improve early infant diagnosis of hepatitis C. 
Early detection of childhood infectious diseases is an important 
public health problem; reliable early diagnosis identifies children 
who could transmit the virus and would benefit from treatment with antivirals. 
We have also demonstrated that the weighted SVM accommodates high
dimensional data and can be used to predict response to neoadjuvant 
breast cancer treatment using genomic information. 

Because machine learning techniques are well suited to binary classification, 
there is great potential for research in applying machine learning to diagnostic 
medicine and other biomedical decision making problems. 
Developing methods of variable selection for the weighted SVM \citep[][]{dasgupta2015feature}
is an important step forward for this research as our simulations indicate that the 
performance of the Gaussian SVM is hindered by noise variables. 
Other areas of future work may include developing methods to accommodate biomarker 
measurements that are taken at different time points from the same patient.  

\section*{Acknowledgements}

The authors gratefully acknowledge the following funding sources: 
NIH T32 CA201159, NIH P01 CA142538, National Center for Advancing Translational Sciences UL1 TR001111, 
NSF DMS-1555141, NSF DMS-1557733, NSF DMS-1513579, NIH 1R01DE024984, NIH U01HD39164, NIH U01AI58372, 
an investigator initiated grant from Merck, NCI Breast SPORE program (P50-CA58223-09A1), 
the Breast Cancer Research Foundation, and the V Foundation for Cancer Research

\section*{Appendix A}

\begin{proof}[Theorem~\ref{se.limit}]
\noindent
Note that
\begin{eqnarray*}
\sqrt{n} \left\{ \widehat{se}\left(\widehat{f}_\alpha\right) - se\left(\widehat{f}_\alpha\right) \right\} 
 & = & \sqrt{n} \left[ \frac{\mathbb{E}_n 1\left\{\widehat{f}_\alpha(\bX) > 0\right\}1(A = 1)}{\mathbb{E}_n 1(A = 1)}
 - \frac{\mathbb{E} 1\left\{\widehat{f}_\alpha(\bX) > 0\right\}1(A = 1)}{\mathbb{E} 1(A = 1)}\right] \\
 & = & \sqrt{n} \Bigg[ \frac{\mathbb{E}_n 1\left\{\widehat{f}_\alpha(\bX) > 0\right\}1(A = 1)}{\mathbb{E}_n 1(A = 1)} 
 - \mathbb{E}1\left\{ \widehat{f}_\alpha(\bX) > 0\right\} 1(A = 1) \\
 &   & \cdot \left\{ \frac{1}{\mathbb{E}_n 1(A = 1)} + \frac{1}{\mathbb{E}1(A = 1)} - \frac{1}{\mathbb{E}_n 1(A = 1)} \right\} \Bigg] \\
 & = & \sqrt{n} \Bigg[ \frac{\mathbb{E}_n 1\left\{\widehat{f}_\alpha(\bX) > 0\right\}1(A = 1)}{\mathbb{E}_n 1(A = 1)} 
 - \frac{\mathbb{E} 1\left\{\widehat{f}_\alpha(\bX) > 0\right\}1(A = 1)}{\mathbb{E}_n 1(A = 1)} \\
 &   & - \frac{\mathbb{E} 1\left\{\widehat{f}_\alpha(\bX) > 0\right\}1(A = 1)}{\mathbb{E}_n 1(A = 1)} 
 \cdot \left\{ \frac{\mathbb{E}_n 1(A = 1)}{\mathbb{E} 1(A = 1)} - \frac{\mathbb{E} 1(A = 1)}{\mathbb{E} 1(A = 1)} \right\} \Bigg] \\
 & = & \sqrt{n} \left( \frac{(\mathbb{E}_n - \mathbb{E})
 \left[1\left\{\widehat{f}_\alpha(\bX) > 0\right\}1(A = 1)\right]}{\mathbb{E}_n 1(A = 1)} \right) \\
 &   & - \sqrt{n} \left(\frac{\mathbb{E}\left[1\left\{\widehat{f}_\alpha(\bX) > 0\right\}1(A = 1)\right] 
 (\mathbb{E}_n - \mathbb{E})1(A = 1)}{\mathbb{E} 1(A = 1) \mathbb{E}_n 1(A = 1)} \right) \\ 
 & = & \sqrt{n} \left(\mathbb{E}_n - \mathbb{E}\right) \left[ \widehat{\rho}_n^{-1} 1\left\{ \widehat{f}_\alpha(\bX) > 0\right\} 
 1(A = 1) - \widehat{\rho}_n^{-1} se\left(\widehat{f}_\alpha\right) 1(A = 1)\right].
\end{eqnarray*}
In the case of a linear or polynomial decision function, 
$\left\{ \widehat{f}_\alpha : \alpha \in [\delta, 1 - \delta] \right\}$ 
is a Vapnik--Cervonenkis class for any $0 < \delta < 1/2$. 
Thus, $\left\{ 1\left(\widehat{f}_\alpha > 0\right) : \alpha \in [\delta, 1 - \delta] \right\}$ 
is a Donsker class for any $0 < \delta < 1/2$. 
Let $H_1\left(\bX, A; \widehat{f}_\alpha, \widehat{\rho}_n, \alpha\right) = 
 \widehat{\rho}_n^{-1} 1\left\{\widehat{f}_\alpha(\bX) > 0\right\} 1(A = 1) 
 - \widehat{\rho}_n^{-1} se\left(\widehat{f}_\alpha\right) 1(A = 1)$. 
Then, we have that
\begin{equation} \label{H1}
\sqrt{n}\left(\mathbb{E}_n - \mathbb{E}\right) H_1\left(\bX, A; \widehat{f}_\alpha, \widehat{\rho}_n, \alpha\right) 
 = \sqrt{n}\left(\mathbb{E}_n - \mathbb{E}\right) H_1\left(\bX, A; \widetilde{f}_\alpha, \rho, \alpha\right) + o_P(1) 
 \rightsquigarrow \mathbb{G}_1(\alpha),
\end{equation}
where $o_P(1)$ is a quantity converging to 0 in probability uniformly over $\alpha \in [\delta, 1 - \delta]$,  
$H_1$ lies in a Donsker class, 
and $\mathbb{G}_1(\alpha)$ is a mean zero Gaussian process with covariance
\begin{multline*}
\sigma_1(\alpha_1, \alpha_2) = \mathbb{E} \left( \rho^{-2} 1(A = 1) 
\left[1\left\{\widetilde{f}_{\alpha_1}(\bX) > 0\right\} - se\left(\widetilde{f}_{\alpha_1}\right)\right]
\left[1\left\{\widetilde{f}_{\alpha_2}(\bX) > 0\right\} - se\left(\widetilde{f}_{\alpha_2}\right)\right]\right) \\
 - \mathbb{E}\left( \rho^{-1} 1(A = 1) 
\left[1\left\{\widetilde{f}_{\alpha_1}(\bX) > 0\right\} - se\left(\widetilde{f}_{\alpha_1}\right)\right]\right) 
\mathbb{E}\left( \rho^{-1} 1(A = 1) 
\left[1\left\{\widetilde{f}_{\alpha_2}(\bX) > 0\right\} - se\left(\widetilde{f}_{\alpha_2}\right)\right]\right). 
\end{multline*} 
Similarly, for specificity, we have that 
\begin{equation} \label{H2}
\sqrt{n}\left(\mathbb{E}_n - \mathbb{E}\right) H_2\left(\bX, A; \widehat{f}_\alpha, \widehat{\rho}_n, \alpha\right) 
 = \sqrt{n}\left(\mathbb{E}_n - \mathbb{E}\right) H_2\left(\bX, A; \widetilde{f}_\alpha, \rho, \alpha\right) + o_P(1) 
 \rightsquigarrow \mathbb{G}_2(\alpha),
\end{equation}
where $H_2\left(\bX, A; \widehat{f}_\alpha, \widehat{\rho}_n, \alpha\right) = 
 \left(1 - \widehat{\rho}_n\right)^{-1} 1\left\{\widehat{f}_\alpha(\bX) < 0\right\} 1(A = -1) 
 - \left(1 - \widehat{\rho}_n\right)^{-1} sp\left(\widehat{f}_\alpha\right) 1(A = -1)$, 
the $o_P(1)$ is uniform over $\alpha$ as before, $\mathbb{G}_2(\alpha)$ is a mean zero Gaussian process with covariance
\begin{multline*}
\sigma_2(\alpha_1, \alpha_2) = \mathbb{E} \left( \tau^{-2} 1(A = -1) 
\left[1\left\{\widetilde{f}_{\alpha_1}(\bX) < 0\right\} - sp\left(\widetilde{f}_{\alpha_1}\right)\right]
\left[1\left\{\widetilde{f}_{\alpha_2}(\bX) < 0\right\} - sp\left(\widetilde{f}_{\alpha_2}\right)\right]\right) \\
 - \mathbb{E}\left( \tau^{-1} 1(A = -1) 
\left[1\left\{\widetilde{f}_{\alpha_1}(\bX) < 0\right\} - sp\left(\widetilde{f}_{\alpha_1}\right)\right]\right) 
 \mathbb{E}\left( \tau^{-1} 1(A = -1) 
\left[1\left\{\widetilde{f}_{\alpha_2}(\bX) < 0\right\} - sp\left(\widetilde{f}_{\alpha_2}\right)\right]\right), 
\end{multline*}
and $\tau = 1 - \rho$. Now,~(\ref{H1}) and (\ref{H2}) together imply that 
\begin{multline}
\sqrt{n} \left(\mathbb{E}_n - \mathbb{E}\right) \left\{ \begin{array}{c} 
H_1\left(\bX, A; \widehat{f}_\alpha, \widehat{\rho}_n, \alpha\right) \\
H_2\left(\bX, A; \widehat{f}_\alpha, \widehat{\rho}_n, \alpha\right) \end{array} \right\} 
 \\ = \sqrt{n} \left(\mathbb{E}_n - \mathbb{E}\right) \left\{ \begin{array}{c} 
H_1\left(\bX, A; \widetilde{f}_\alpha, \rho, \alpha\right) \\
H_2\left(\bX, A; \widetilde{f}_\alpha, \rho, \alpha\right) \end{array} \right\} + o_P(1) 
\rightsquigarrow \left\{ \begin{array}{c} \mathbb{G}_1(\alpha) \\ \mathbb{G}_2(\alpha) \end{array} \right\},
\end{multline}
because marginal tightness implies joint tightness \citep[see Lemma 7.14 (i) of ][]{kosorok2008introduction}. 
The joint limiting distribution follows. 
\end{proof}

\noindent
\begin{proof}[Theorem~\ref{fisher}]
First, we note that, for each $\bx \in \mathcal{X}$, the optimal classifier is 
\begin{eqnarray*}
D^*_\alpha(\bx) & = & \mathrm{sign}\big[ \mathbb{E}\left\{1(A \ne -1) C_A(\alpha) | \bX = \bx \right\} 
 - \mathbb{E}\left\{1(A \ne 1)C_A(\alpha) | \bX = \bx \right\} \big] \\
 & = & \mathrm{sign} \big\{ \alpha \mathrm{Pr}(A = 1 | \bX = \bx) - (1-\alpha) \mathrm{Pr}(A = -1 | \bX = \bx) \big\}.
\end{eqnarray*}
Next, we note that $\widetilde{f}_\alpha$ minimizes 
\begin{eqnarray*}
 & & \hspace{-0.5in} \mathbb{E}\left[\mathrm{max}\{0, 1 - Af(\bX)\} C_A(\alpha) | \bX = \bx \right] \\
 & = & \mathrm{Pr}(A = 1 | \bX = \bx) \mathbb{E}[\mathrm{max}\{0, 1 - Af(\bX)\} C_A(\alpha) | \bX = \bx, A = 1] \\
 &   & + \mathrm{Pr}(A = -1 | \bX = \bx) \mathbb{E}[\mathrm{max}\{0, 1 - Af(\bX)\} C_A(\alpha) | \bX = \bx, A = -1] \\
 & = & \alpha \mathrm{Pr}(A = 1 | \bX = \bx) \mathrm{max}\{0, 1 - f(\bx)\} \\ 
 &   & + (1 - \alpha) \mathrm{Pr}(A = -1 | \bX = \bx) \mathrm{max}\{0, 1 + f(\bx)\} \\
 & = & f(\bx) \big\{(1-\alpha)\mathrm{Pr}(A = -1 | \bX = \bx) - \alpha \mathrm{Pr}(A = 1 | \bX = \bx) \big\} \\ 
 &   & + \alpha \mathrm{Pr}(A = 1 | \bX = \bx) + (1-\alpha)\mathrm{Pr}(A = -1 | \bX = \bx).
\end{eqnarray*}
We note that as long as both $\alpha$ and $\mathrm{Pr}(A = 1 | \bX = \bx)$ lie in the open interval $(0, 1)$ 
for almost all $\bx$, then $\alpha \mathrm{Pr}(A = 1 | \bX = \bx) \phi\{f(\bX)\} + (1-\alpha) \mathrm{Pr}(A = -1 | \bX = \bx) \phi\{-f(\bX)\}$ 
decreases strictly on $(-\infty, -1]$ and increases strictly on 
$[1, \infty)$. Thus, the minimum $\widetilde{f}_\alpha$ must take values in $[-1, 1]$, which justifies the third equality above. 
We have that $\widetilde{f}_\alpha$ will be positive when $\alpha \mathrm{Pr}(A = 1 | \bX = \bx) > (1-\alpha)\mathrm{Pr}(A = -1 | \bX = \bx)$ and negative otherwise. 
The extension to $\alpha \in [0, 1]$ is trivial as long as $0 < \mathrm{Pr}(A = 1 | \bX = \bx) < 1$ for almost all $\bx$. 
Thus, $\widetilde{f}_\alpha(\bx)$ has the same sign as $D^*_\alpha(\bx)$, which completes the proof. 
\end{proof}

\noindent
\begin{proof}[Theorem~\ref{cons}]
Let $\widetilde{f} = \operatorname*{arg\,min}_{f \in \bar{\mathcal{H}}_k} \mathcal{R}_{\alpha, \phi}(f)$.   
Let $\| \cdot \|_k$ be the norm associated with $\mathcal{H}_k$. Note that 
$\widehat{f}_\alpha = \operatorname*{arg\,min}_{f \in \bar{\mathcal{H}}_k} 
\big\{ \mathbb{E}_n L_{\alpha, \phi}(f) + \lambda_n \| f \|^2_k \big\}$, 
where $\mathbb{E}_n$ denotes the empirical measure of $(\bX, A)$. 
We start by finding a bound for $\left\| \sqrt{\lambda_n} \widehat{f}_\alpha\right\|^2_k$. 
By definition of $\widehat{f}_\alpha$, we have that, for any $f \in \mathcal{H}_k$, 
$$
\mathbb{E}_n L_{\alpha, \phi}\left(\widehat{f}_\alpha\right) + \lambda_n \left\| \widehat{f}_\alpha \right\|^2_k 
 \le \mathbb{E}_n L_{\alpha, \phi}(f) + \lambda_n \| f \|^2_k.
$$ 
Setting $f \equiv 0$ in the above and noting that $\| 0 \|^2_k = 0$ and $\phi(0) = 1$ yields 
$$
\mathbb{E}_n L_{\alpha, \phi}\left(\widehat{f}_\alpha\right) + \lambda_n \left\| \widehat{f}_\alpha \right\|^2_k 
 \le \mathbb{E}_n C_A(\alpha). 
$$
We have that $\mathbb{E}_n C_A(\alpha) = n^{-1} \sum_{i=1}^n \left\{\alpha 1(A_i = 1) 
 + (1-\alpha) 1(A_i = -1)\right\} \le \mathrm{max}(\alpha, 1 - \alpha)$ 
and $\mathbb{E}_n L_{\alpha, \phi}\left(\widehat{f}_\alpha\right) \ge 0$. It follows that 
$$
\left\| \sqrt{\lambda_n} \widehat{f}_\alpha \right\|^2_k \le \mathrm{max}(\alpha, 1-\alpha) \le 1
$$

Next, we observe that $\left\{ \sqrt{\lambda_n} \widehat{f}_\alpha 
 : \left\| \sqrt{\lambda_n} \widehat{f}_\alpha \right\|_k \le 1 \right\}$ 
is a unit ball in a RKHS and is contained within a Donsker class. 
By a Donsker preservation result on page 173 of \cite{kosorok2008introduction}, 
$\left\{ \sqrt{\lambda_n} L_{\alpha, \phi}\left(\widehat{f}_\alpha\right) 
 : \left\| \sqrt{\lambda_n} \widehat{f}_\alpha \right\|_k \le 1 \right\}$
is also contained within a Donsker class because $\phi(Af)$ is Lipschitz continuous in $f$. 

The definition of $P$-Donsker gives us that 
$$
\sqrt{n} (\mathbb{E}_n - \mathbb{E}) \sqrt{\lambda_n} L_{\alpha, \phi}\left(\widehat{f}_\alpha\right) = O_P(1).
$$
Thus, 
\begin{eqnarray*}
(\mathbb{E}_n - \mathbb{E})L_{\alpha, \phi}\left(\widehat{f}_\alpha\right) 
 & = & \sqrt{(n\lambda_n)^{-1}} \sqrt{n} (\mathbb{E}_n - \mathbb{E}) \sqrt{\lambda_n} L_{\alpha, \phi}\left(\widehat{f}_\alpha\right) \\
 & = & \sqrt{(n\lambda_n)^{-1}} O_P(1), 
\end{eqnarray*}
which converges to 0 in probability because $n\lambda_n \rightarrow \infty$. 

Next, it follows from the definition of $\widehat{f}_\alpha$ that 
$$
\mathbb{E}_n L_{\alpha, \phi}\left(\widehat{f}_\alpha\right) \le \mathbb{E}_n L_{\alpha, \phi}\left(\widehat{f}_\alpha\right) 
 + \lambda_n \left\| \widehat{f}_\alpha \right\|^2_k 
\le \mathbb{E}_n L_{\alpha, \phi}\left(\widetilde{f}_\alpha\right) + \lambda_n \left\| \widetilde{f}_\alpha \right\|^2_k. 
$$
Taking the $\operatorname*{lim\,sup}_n$ on both sides and using the fact that $\lambda_n \rightarrow 0$ yields 
$$
\operatorname*{lim\,sup}_n \mathbb{E}_n L_{\alpha, \phi}\left(\widehat{f}_\alpha\right) 
 \le \mathbb{E}\left\{L_{\alpha, \phi}\left(\widetilde{f}_\alpha\right)\right\}, 
$$
almost surely. 
Thus, for all $n$ large enough, we have 
$$
\mathbb{E}_n L_{\alpha, \phi}\left(\widehat{f}_\alpha\right) 
 \le \mathbb{E}\left\{ L_{\alpha, \phi} \left(\widetilde{f}_\alpha\right)\right\} 
 \le \mathbb{E}\left\{L_{\alpha, \phi} \left(\widehat{f}_\alpha\right)\right\}
$$
almost surely. For $n$ large enough, we have 
$\left|\mathcal{R}_{\alpha, \phi}\left(\widehat{f}_\alpha\right) 
 - \mathcal{R}_{\alpha, \phi} \left(\widetilde{f}_\alpha\right)\right|
 \le \left|\left(\mathbb{E}_n - \mathbb{E}\right) L_{\alpha, \phi}\left(\widehat{f}_\alpha\right)\right| = o_P(1)$. 
By Lemma~\ref{excessrisk}, we have that 
$\left|\mathcal{R}_{\alpha}\left(\widehat{f}_\alpha\right) - \inf_f \mathcal{R}_{\alpha} (f)\right| \le 
 \left|\mathcal{R}_{\alpha, \phi}\left(\widehat{f}_\alpha\right) - \inf_f \mathcal{R}_{\alpha, \phi} (f)\right| = o_P(1)$.
\end{proof}

\noindent
\begin{proof}[Lemma~\ref{gc}.] 
When the decision function is linear, quadratic, or polynomial, 
it lies in a Vapnik--Cervonenkis (VC) class 
as on page 238 of \cite{friedman2001elements}. This implies that 
$\left\{\widehat{f}_\alpha : \alpha \in [0, 1]\right\}$ 
is contained in a GC class by Theorem 9.3 of Kosorok (2008). 

Next, we consider the case where the decision function is estimated using a Gaussian kernel. 
Because the exponential function is monotone, the class of all functions of the form 
$f(\bx; \mathbf{y}) = \exp( -c \| \bx - \mathbf{y} \|^2 )$ is a 
VC class by Lemma 9.9 (viii) of \cite{kosorok2008introduction}. 
The RKHS is a VC-hull class as defined on page 158 of \cite{kosorok2008introduction}. 
It now follows that $\left\{\widehat{f}_\alpha : \alpha \in [0, 1]\right\}$ 
is contained in a GC class by Corollary 9.5 of \cite{kosorok2008introduction}. 
\end{proof}

\noindent
\begin{proof}[Theorem \ref{unifcons}.] 
From Lemma~\ref{excessrisk}, the claim will follow if we can show that 
$$
\operatorname*{sup}_{\alpha \in [0, 1]} \left| \mathcal{R}_{\alpha, \phi} \left(\widehat{f}_\alpha\right) 
 - \inf_{f \in \bar{\mathcal{H}}_k} \mathcal{R}_{\alpha, \phi}(f) \right| \xrightarrow[]{P} 0. 
$$
Following the proof of Theorem~3.3 of \cite{zhao2012estimating},  
we have that, for all $n$ large enough, 
$$
\mathbb{E}_n L_{\alpha, \phi} \left(\widehat{f}_\alpha\right) \le 
\mathbb{E}\left\{ L_{\alpha, \phi} \left(\widetilde{f}_\alpha\right)\right\} \le 
\mathbb{E}\left\{L_{\alpha, \phi} \left(\widehat{f}_\alpha\right)\right\} 
$$
almost surely, where $L_{\alpha, \phi}(f)$ is as defined in Section~\ref{roc.theory}. 
Thus, for all $n$ large enough, we have $\left|\mathcal{R}_{\alpha, \phi}\left(\widehat{f}_\alpha\right) 
 - \mathcal{R}_{\alpha, \phi} \left(\widetilde{f}_\alpha\right)\right|
\le \left|(\mathbb{E}_n - \mathbb{E}) L_{\alpha, \phi}\left(\widehat{f}_\alpha\right)\right|$ almost surely. 
It follows that, for $n$ large enough, 
\begin{eqnarray*}
\sup_{\alpha \in [0, 1]} \left|\mathcal{R}_{\alpha, \phi}\left(\widehat{f}_\alpha\right) - 
\mathcal{R}_{\alpha, \phi} \left(\widetilde{f}_\alpha\right) \right|
\le \sup_{\alpha \in [0, 1]} \left|(\mathbb{E}_n - \mathbb{E}) 
L_{\alpha, \phi}\left(\widehat{f}_\alpha\right) \right| \xrightarrow[]{P} 0, 
\end{eqnarray*}
where the convergence follows from Lemma~\ref{gc} 
and the arguments in Section~\ref{roc.theory}. 
\end{proof}

\noindent
\begin{proof}[Lemma \ref{continuity}.] 
Following Remark 4, $D^*_\alpha(\bX) = \mathrm{sign}\left\{ f^*_\alpha(\bX)\right\}$, where 
$$
f^*_\alpha(\bX) = \frac{\mathrm{Pr}(A = 1 | \bX)}{1 - \mathrm{Pr}(A = 1 | \bX)} 
 - \frac{1 -\alpha}{\alpha}. 
$$
Thus, the sensitivity and specificity of $D^*_\alpha$ are 
$$
se^*_\alpha = \mathrm{Pr} \left\{ \frac{\mathrm{Pr}(A = 1 | \bX)}{1 - \mathrm{Pr}(A = 1 | \bX)} 
 - \frac{1 -\alpha}{\alpha} > 0 \big| A = 1 \right\} 
$$
and 
$$
sp^*_\alpha = \mathrm{Pr} \left\{ \frac{\mathrm{Pr}(A = 1 | \bX)}{1 - \mathrm{Pr}(A = 1 | \bX)} 
 - \frac{1 -\alpha}{\alpha} < 0 \big| A = -1 \right\}, 
$$
which are continuous in $\alpha$ when $\mathrm{Pr}(A = 1 | \bX)$ 
is a continuous random variable. 
\end{proof}

\begin{proof}[Theorem \ref{risk.continuity}.]
Let $\alpha_n$ be a sequence such that $\alpha_n \in [0, 1]$ for $n \ge 1$ 
and $\alpha_n \rightarrow \alpha$. Assume that 
$\mathrm{lim \, sup}_{n \rightarrow \infty} \mathcal{R}_{\alpha_n}\left(\widetilde{f}_{\alpha_n}\right)
 > \mathcal{R}_\alpha\left(\widetilde{f}_\alpha\right)$.  
Because $\widetilde{f}_\alpha \in \mathcal{F}$, we have by the definition of 
$\widetilde{f}_{\alpha_n}$ that $\mathcal{R}_{\alpha_n}\left(\widetilde{f}_{\alpha_n}\right) 
 \le \mathcal{R}_{\alpha_n}\left(\widetilde{f}_\alpha\right)$. 
However, by continuity of $\mathcal{R}_\alpha(f)$ 
for fixed $f$, we have that $\mathcal{R}_{\alpha_n}\left(\widetilde{f}_\alpha\right) 
 \rightarrow \mathcal{R}_\alpha\left(\widetilde{f}_\alpha\right)$, a contradiction. 
Next, assume that 
$\mathrm{lim \, inf}_{n \rightarrow \infty} \mathcal{R}_{\alpha_n} \left(\widetilde{f}_{\alpha_n}\right) 
 < \mathcal{R}_\alpha\left(\widetilde{f}_\alpha\right)$. However, we have that 
$$
\mathrm{lim \, inf}_{n \rightarrow \infty} \mathcal{R}_{\alpha_n} \left(\widetilde{f}_{\alpha_n}\right) 
 = \mathrm{lim \, inf}_{n \rightarrow \infty} \mathcal{R}_\alpha \left(\widetilde{f}_{\alpha_n}\right) 
 \ge \mathcal{R}_\alpha\left(\widetilde{f}_\alpha\right), 
$$
by continuity of $\mathcal{R}_\alpha(f)$ for fixed $f$, the definition of 
$\widetilde{f}_\alpha$ and the fact that $\widetilde{f}_{\alpha_n} \in \mathcal{F}$. 
This yields a contradiction and we have that $\mathcal{R}_{\alpha_n}\left(\widetilde{f}_{\alpha_n}\right)
 \rightarrow \mathcal{R}_\alpha\left(\widetilde{f}_\alpha\right)$, which 
completes the proof. 
\end{proof}

\begin{proof}[Corollary \ref{cor1}.]
By arguments in the proof of Lemma~\ref{gc}, $\mathcal{F}$ is 
a GC class. By arguments in Section~\ref{roc.theory}, 
$\left\{ L_{\alpha, \phi}(f) : f \in \mathcal{F} \right\}$ is a GC class. 
Thus, $\sup_{f \in \mathcal{F}} \left| (\mathbb{E}_n - \mathbb{E}) L_{\alpha, \phi}(f) \right| \xrightarrow[]{P} 0$. 
Define $\widetilde{f}_{\alpha, \phi} = \operatorname*{arg \, min}_{f \in \mathcal{F}} \mathbb{E} L_{\alpha, \phi}(f)$. 
Because $\mathbb{E} L_{\alpha, \phi}(f)$ is convex in $f$ and 
$\mathcal{F}$ is a convex set, $\widetilde{f}_{\alpha, \phi}$ is 
a unique minimizer. Because $\mathbb{E} L_{\alpha, \phi}(f)$ is continuous in $f$, 
the necessary identifiability condition holds by Lemma~14.3 of \cite{kosorok2008introduction} 
and, by Theorem~2.12 of \cite{kosorok2008introduction}, 
$\sup_{x \in \mathcal{X}} \left| \widehat{f}_\alpha(x) - \widetilde{f}_{\alpha, \phi}(x) \right| = o_P(1)$. 
It follows that $\left| se\left(\widehat{f}_\alpha\right) - se\left(\widetilde{f}_{\alpha, \phi}\right) \right| = o_P(1)$ 
and $\left| sp\left(\widehat{f}_\alpha\right) - sp\left(\widetilde{f}_{\alpha, \phi}\right) \right| = o_P(1)$. 
Define $\widetilde{se}_\alpha = se\left(\widetilde{f}_{\alpha, \phi}\right)$ 
and $\widetilde{sp}_\alpha = sp\left(\widetilde{f}_{\alpha, \phi}\right)$ and note that 
\begin{multline*}
\left| \alpha \rho \left\{ se\left( \widehat{f}_\alpha\right) - \widetilde{se}_\alpha \right\}
 + (1 - \alpha) (1 - \rho) \left\{ sp\left( \widehat{f}_\alpha\right) - \widetilde{sp}_\alpha \right\} \right|
\\ \le \left| se\left(\widehat{f}_\alpha\right) - \widetilde{se}_\alpha \right| 
+ \left| sp\left(\widehat{f}_\alpha\right) - \widetilde{sp}_\alpha \right| = o_P(1).
\end{multline*}
By Theorem~\ref{cons} and Remark~\ref{true.opt.rmrk}, it must hold that 
$\alpha \rho \widetilde{se}_\alpha + (1 - \alpha)(1 - \rho) \widetilde{sp}_\alpha 
 = \alpha \rho se\left(\widetilde{f}_\alpha\right) + (1 - \alpha)(1 - \rho) sp\left(\widetilde{f}_\alpha \right)$. 
Finally, by Theorem~\ref{unifcons}, it must hold that 
$\mathrm{sup}_{\alpha \in [0, 1]} \left| se\left(\widehat{f}_\alpha\right)
- \widetilde{se}_\alpha \right| \xrightarrow[]{P} 0$
and 
$\mathrm{sup}_{\alpha \in [0, 1]} \left| sp\left(\widehat{f}_\alpha\right)
- \widetilde{sp}_\alpha \right| \xrightarrow[]{P} 0$. 
\end{proof}

\begin{proof}[Corollary \ref{cor2}.]
First note that 
\begin{eqnarray*}
\operatorname*{sup}_{\alpha \in [0, 1]} \left| \widehat{se}\left(\widehat{f}_\alpha\right)
 - \widetilde{se}_\alpha\right| 
 & \le & \operatorname*{sup}_{\alpha \in [0, 1]} \left| \widehat{se}\left(\widehat{f}_\alpha\right) 
 - se\left(\widehat{f}_\alpha\right) \right| + 
 \operatorname*{sup}_{\alpha \in [0, 1]} \left| se\left(\widehat{f}_\alpha\right) - \widetilde{se}_\alpha \right| \\
 & \le & \operatorname*{sup}_{\alpha \in [0, 1]} \left| \frac{\mathbb{E}_n 1\left\{ \widehat{f}_\alpha(\bX) > 0, A = 1\right\}}
{\mathbb{E}_n 1(A = 1)} - \frac{\mathbb{E}_n 1\left\{ \widehat{f}_\alpha(\bX) > 0, A = 1\right\}}{\rho} \right| \\
 & & + \operatorname*{sup}_{\alpha \in [0, 1]} \left| \frac{\mathbb{E}_n 1\left\{ \widehat{f}_\alpha(\bX) > 0, A = 1\right\}}{\rho} 
 - \frac{\mathbb{E} 1\left\{ \widehat{f}_\alpha(\bX) > 0, A = 1\right\}}{\rho} \right| \\
 & & + \operatorname*{sup}_{\alpha \in [0, 1]} \left| se\left(\widehat{f}_\alpha\right) - \widetilde{se}_\alpha \right|.  
\end{eqnarray*}
The first piece above is equal to 
\begin{multline*}
\operatorname*{sup}_{\alpha \in [0, 1]} \left| \left[ \mathbb{E}_n 
 1\left\{ \widehat{f}_\alpha(\bX) > 0, A = 1\right\} \right] 
 \left[ \left\{\mathbb{E}_n 1(A = 1)\right\}^{-1} - \rho^{-1} \right] \right| \\
 \le \left[ \operatorname*{sup}_{\alpha \in [0, 1]} \left| 
 \mathbb{E}_n 1\left\{\widehat{f}_\alpha(\bX) > 0, A = 1\right\} \right| \right] 
 \left| \left\{\mathbb{E}_n 1(A = 1)\right\}^{-1} - \rho^{-1} \right| 
 = O_P(1) o_P(1), 
\end{multline*}
where $\left| \left\{\mathbb{E}_n 1(A = 1)\right\}^{-1} - \rho^{-1} \right| = o_P(1)$ 
by the continuous mapping theorem and the assumption that $\rho > 0$. 

Next, let $\epsilon > 0$ and $g_\epsilon(x) = \epsilon^{-1}\left(x^+ \wedge \epsilon\right)$ 
where $x^+ = \max(x, 0)$ and $\wedge$ denotes minimum. 
Note that, by Lemma~\ref{ep.lem} 
below, 
$
\sup_{\alpha \in [0, 1]} \left| (\mathbb{E}_n - \mathbb{E}) \rho^{-1} 
 g_\epsilon\left\{ \widehat{f}_\alpha(\bX)\right\} 1(A = 1)\right| = o_P(1),
$
and thus, 
$
\sup_{\alpha \in [0, 1]} \left| (\mathbb{E}_n - \mathbb{E}) \rho^{-1} 
 1\left\{ \widehat{f}_\alpha(\bX) > 0\right\} 1(A = 1)\right| \le o_P(1) + 2\epsilon. 
$
Since $\epsilon$ was arbitrary, the second piece above is equal to $o_P(1)$. 
The third piece above is equal to $o_P(1)$ 
by Corollary~\ref{cor1} 
above. The proof for specificity is analogous and is omitted. 
\end{proof}

\begin{lem} \label{ep.lem}
Let $\mathcal{F}$ be a GC class and $g: \mathbb{R} \rightarrow \mathbb{R}$ 
be a continuous, bounded function such that 
$\sup_{x \in \mathbb{R}} |g(x)| = c_0$, 
$\lim_{x \rightarrow -\infty} g(x) = c_1$, and 
$\lim_{x \rightarrow \infty} g(x) = c_2$. Then, 
$g(\mathcal{F}) = \left\{ g(f) : f \in \mathcal{F}\right\}$ is a GC class. 
\end{lem}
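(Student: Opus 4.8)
The plan is to reduce to the Lipschitz case, for which preservation of the Glivenko--Cantelli (GC) property is standard, by first upgrading the continuity of $g$ to \emph{uniform} continuity and then approximating $g$ uniformly by bounded Lipschitz functions. The first step is the observation that the hypotheses on $g$ -- boundedness together with finite limits at $\pm\infty$ -- force $g$ to be uniformly continuous on all of $\mathbb{R}$: given $\eta>0$, choose $M$ so large that $|g(x)-c_1|<\eta/2$ for $x\le -M$ and $|g(x)-c_2|<\eta/2$ for $x\ge M$; on the compact interval $[-M-1,M+1]$ the function $g$ is uniformly continuous, and combining the two facts yields a single $\delta\in(0,1)$ with $|g(s)-g(t)|<\eta$ whenever $|s-t|<\delta$, uniformly in $s,t\in\mathbb{R}$. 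This is the only place the limit hypotheses enter, and it is precisely what prevents a direct appeal to a ``continuous transformations of GC classes are GC'' statement.

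Next, define the inf-convolution (McShane regularization) $g_m(x)=\inf_{y\in\mathbb{R}}\{g(y)+m|x-y|\}$ for $m\ge 1$. Each $g_m$ is $m$-Lipschitz; taking $y=x$ gives $g_m(x)\le g(x)\le c_0$, while $g(y)+m|x-y|\ge g(y)\ge -c_0$ gives $g_m(x)\ge -c_0$, so $\|g_m\|_\infty\le c_0$. Using the $\delta$ from the first step -- for $m$ large the infimum defining $g_m(x)$ is effectively taken over $y$ with $|x-y|<\delta$, on which range $g(y)$ is within $\eta$ of $g(x)$ -- a short estimate shows $0\le g(x)-g_m(x)\le\eta$ for all $x$ once $m$ is large, hence $\|g-g_m\|_\infty\to 0$ as $m\to\infty$. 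For each fixed $m$ the class $g_m(\mathcal{F})=\{g_m\circ f:f\in\mathcal{F}\}$ is GC: $g_m$ is Lipschitz, $\mathcal{F}$ is GC, and $g_m(\mathcal{F})$ has the constant (hence integrable) envelope $c_0$, so Corollary~9.27(iii) of \cite{kosorok2008introduction} applies.

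Finally I pass to the limit. Since $|(\mathbb{E}_n-\mathbb{E})h|\le 2\|h\|_\infty$ for any bounded $h$, for all $m$ and $n$
\[
\sup_{f\in\mathcal{F}}\bigl|(\mathbb{E}_n-\mathbb{E})(g\circ f)\bigr|\;\le\;\sup_{f\in\mathcal{F}}\bigl|(\mathbb{E}_n-\mathbb{E})(g_m\circ f)\bigr|+2\|g-g_m\|_\infty .
\]
Fixing $m$ and letting $n\to\infty$, the GC property of $g_m(\mathcal{F})$ gives $\limsup_n\sup_{f}|(\mathbb{E}_n-\mathbb{E})(g\circ f)|\le 2\|g-g_m\|_\infty$ almost surely; intersecting over $m\in\mathbb{N}$ and sending $m\to\infty$ shows the left-hand side is $0$ almost surely, i.e.\ $g(\mathcal{F})$ is GC. The same argument works verbatim with convergence in probability in place of almost sure convergence, to match whichever notion of GC is in force.

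The main obstacle is conceptual rather than computational: composition with an arbitrary continuous function need not preserve the GC property (the envelope and modulus of continuity can both misbehave at infinity), so the crux is recognizing that boundedness together with the limits at $\pm\infty$ delivers uniform continuity, which in turn permits the reduction to the genuinely Lipschitz approximants $g_m$. Everything after that -- the inf-convolution bounds and the interchange of limits in the last step -- is routine.
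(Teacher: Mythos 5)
There is a genuine gap, and it sits exactly at the step you call routine: the claim that for fixed $m$ the class $g_m(\mathcal{F})=\{g_m\circ f: f\in\mathcal{F}\}$ is GC ``by Corollary~9.27(iii) of \cite{kosorok2008introduction} because $g_m$ is Lipschitz and $g_m(\mathcal{F})$ has the constant envelope $c_0$.'' That preservation result (Corollary~9.27, equivalently Theorem~9.26 with $k=1$) requires an integrability condition on the \emph{input} class $\mathcal{F}$ itself --- an integrable envelope, or at least $\sup_{f\in\mathcal{F}}|\mathbb{E}f|<\infty$ --- not merely an integrable envelope for the transformed class. A GC class need not satisfy this: GC only forces the \emph{centered} envelope $\|f-\mathbb{E}f\|^*_{\mathcal{F}}$ to be integrable (Lemma~8.13 of \cite{kosorok2008introduction}), while the means $\mathbb{E}f$ may be unbounded over the class (the class of all constant functions is a trivial example, and the SVM decision functions to which this lemma is applied are of exactly this flavor). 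So the citation does not cover your step, and the reduction to Lipschitz $g_m$ does nothing to repair it, because the obstruction is not the modulus of continuity of $g$ but the unboundedness of $\mathcal{F}$.

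This is also why your remark that uniform continuity ``is the only place the limit hypotheses enter'' is a warning sign: if your argument were valid, the lemma would hold for every bounded uniformly continuous (indeed every bounded Lipschitz) $g$, with the limits at $\pm\infty$ playing no role. In the paper's proof the limits are the crux: one writes $f=\mathbb{E}f+\dot f$ with $\dot f=f-\mathbb{E}f$, restricts to the event $\|f-\mathbb{E}f\|^*_{\mathcal{F}}\le M$ (using integrability of the centered envelope to make the complement cost at most $\epsilon$), observes that when $|\mathbb{E}f|$ is large the limits of $g$ make $g(\mathbb{E}f+\dot f)$ within $\epsilon$ of the constant $c_1$ or $c_2$, so that only shifts $c=\mathbb{E}f$ in a compact interval $[b_1,b_2]$ matter; that interval is discretized using uniform continuity of $g$ on compacts, and Theorem~9.26 is then applied only to the finitely many classes $\{g(c+\dot f):\dot f\in\dot{\mathcal{F}}\}$, which is legitimate because the centered class $\dot{\mathcal{F}}$ does have an integrable envelope and mean bounded by that envelope. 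To fix your proof you would have to import this centering/compactification argument, at which point the inf-convolution approximation is no longer needed (uniform continuity on the compact interval of shifts suffices). As written, the middle step of your argument asserts precisely the delicate preservation statement that the lemma was written to establish.
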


\begin{proof}
By Lemma 8.13 of \cite{kosorok2008introduction}, $\mathbb{E}\|f - \mathbb{E}f\|^*_\mathcal{F} < \infty$. 
Fix $\epsilon > 0$ and find $M < \infty$ and $-\infty < k_1 < 0 < k_2 < \infty$ such that 
$\mathbb{E} \left\{ 1\left( \|f - \mathbb{E}f\|^*_\mathcal{F} > M\right) \right\} 
 \le \epsilon / 2c_0$ and $\sup_{x \le k_1} |g(x) - c_1| \le \epsilon$ 
and $\sup_{x \ge k_2} |g(x) - c_2| \le \epsilon$. 
Set $b_1 = k_1 - 2M$ and $b_2 = k_2 + 2M$. Then, 
$g(f) = g(\mathbb{E}f + f - \mathbb{E}f) 1\left(\|f - \mathbb{E}f\|^*_\mathcal{F} \le M\right) 
 + g(f) 1\left(\|f - \mathbb{E}f\|^*_\mathcal{F} > M\right)$, and 
$$
\|(\mathbb{E}_n - \mathbb{E})g(f) \|_\mathcal{F} \le \|(\mathbb{E}_n - \mathbb{E})g(f)\|_\mathcal{F} 
1\left(\|f - \mathbb{E}f\|^*_\mathcal{F} \le M\right)  + 2c_0 1\left(\|f - \mathbb{E}f\|^*_\mathcal{F} > M\right). 
$$
We have that 
\begin{eqnarray*}
\|(\mathbb{E}_n - \mathbb{E}) g(f)\|_\mathcal{F} 1\left(\|f - \mathbb{E}f\|^*_\mathcal{F} \le M\right) 
 & = & \|(\mathbb{E}_n - \mathbb{E})g(\mathbb{E}f + \dot{f})\|_\mathcal{F} 
1\left(\|f - \mathbb{E}f\|^*_\mathcal{F} \le M\right) \\
 & \le & \|(\mathbb{E}_n - \mathbb{E}) g(f) \|_{\dot{\mathcal{F}}_\epsilon} 
1\left(\|f - \mathbb{E}f\|^*_\mathcal{F} \le M\right) + 2\epsilon, 
\end{eqnarray*}
where $\dot{f} = f - \mathbb{E}f$ and $\dot{\mathcal{F}}_\epsilon 
 = \left\{c + \dot{f} : f \in \mathcal{F}, b_1 \le c \le b_2\right\}$. 
Since $g$ is continuous and $C_\epsilon = \left\{x : b_1 \le x \le b_2 \right\}$ 
is compact, there exists a $\delta > 0$ such that 
$\sup_{x_1, x_2 \in C_\epsilon : |x_1 - x_2| \le \delta} |g(x_1) - g(x_2)| \le \epsilon$. 
Let $D_\epsilon$ be a finite subset of $[b_1, b_2]$ such that 
$\sup_{c \in [b_1, b_2]} \inf_{\widetilde{c} \in D_\epsilon} \left|c - \widetilde{c}\right| \le \delta$. 
Now, define $\dot{\mathcal{F}}_\epsilon^* = \left\{ c + \dot{f} : f \in \mathcal{F}, c \in D_\epsilon\right\}$. 
Let $\widetilde{c}(c) = \mathrm{arg \, inf}_{\widetilde{c} \in D_\epsilon} |c - \widetilde{c}|$ 
and note that, provided $\|f - \mathbb{E}f\|^*_\mathcal{F} \le M$, 
\begin{eqnarray*}
\sup_{f_1 \in \dot{\mathcal{F}}_\epsilon} \inf_{f_2 \in \dot{\mathcal{F}}^*_\epsilon} \|g(f_1) - g(f_2)\|
& \le & \sup_{f \in \mathcal{F}, b_1 \le c \le b_2} \left|g\left(c + \dot{f}\right) 
 - g\left\{\widetilde{c}(c) + \dot{f}\right\} \right| \\
 & \le & \sup_{x_1, x_2 \in C_\epsilon : |x_1 - x_2| \le \delta} |g(x_1) - g(x_2)| \\
 & \le & \epsilon.  
\end{eqnarray*}
This implies that $\|(\mathbb{E}_n - \mathbb{E}) g(f)\|_{\dot{\mathcal{F}}_\epsilon} \le 
\|(\mathbb{E}_n - \mathbb{E}) g(f)\|_{\dot{\mathcal{F}}_\epsilon^*} + 2\epsilon$ 
and $\|(\mathbb{E}_n - \mathbb{E}) g(f)\|_{\dot{\mathcal{F}}_\epsilon^*} 
 \le \max_{c \in D_\epsilon} \|(\mathbb{E}_n - \mathbb{E}) g(c + f)\|_{\dot{\mathcal{F}}}$, 
where $\dot{\mathcal{F}} = \left\{ f - \mathbb{E}f : f \in \mathcal{F}\right\}$. 
We have that $\|(\mathbb{E}_n - \mathbb{E}) g(c + f)\|_{\dot{\mathcal{F}}} 
1\left( \|f - \mathbb{E}f\|^*_\mathcal{F} \le M \right) \le \|(\mathbb{E}_n - \mathbb{E})g(c + f)\|_{\dot{\mathcal{F}}}$. 
Theorem~9.26 of \cite{kosorok2008introduction} gives us that $\left\{ g(c + f) : f \in \dot{\mathcal{F}}\right\}$ 
is GC, since the mapping $x \mapsto g(c + x)$ is continuous and bounded and
$\dot{\mathcal{F}}$ has an integrable envelope. Thus, 
$\|(\mathbb{E}_n - \mathbb{E})g(f)\|_\mathcal{F} \le o_P(1) + 4\epsilon 
 + 2 c_0 1\left(\|f - \mathbb{E}f\|_\mathcal{F} > M\right)$. 
When $\mathrm{Pr}\left(\|f - \mathbb{E}f\|_\mathcal{F} > M\right) \le \epsilon$ and 
$\epsilon$ is arbitrary, $\|(\mathbb{E}_n - \mathbb{E}) g(f) \|_\mathcal{F} = o_P(1)$. 
Combining this with Lemma~8.16 of \cite{kosorok2008introduction}, 
we obtain the desired convergence. 
\end{proof}

\section*{Appendix B}

Table~\ref{sim1.sesp.nonlin} 
below contains optimal sensitivities 
and specificities averaged across replications for the four 
methods when the true generative model is nonlinear. 
\begin{table}[h!]
\caption{Average optimal sensitivity and specificity when true model is nonlinear.} 
\label{sim1.sesp.nonlin}
\centering
\scalebox{0.9}{
\begin{tabular}{ccc|cccccccc}
  \hline
   &  &  & \multicolumn{2}{c}{Linear SVM} & \multicolumn{2}{c}{Gaussian SVM} & \multicolumn{2}{c}{Logistic} & \multicolumn{2}{c}{Semiparametric} \\$n$ & $p$ & $q$ & $se$ & $sp$ & $se$ & $sp$ & $se$ & $sp$ & $se$ & $sp$ \\ 
   \hline
250 & 2 & 0.05 & 0.64 & 0.59 & 0.70 & 0.77 & 0.61 & 0.58 & 0.53 & 0.58 \\ 
   &  & 0.25 & 0.58 & 0.69 & 0.73 & 0.80 & 0.59 & 0.66 & 0.56 & 0.61 \\ 
   & 5 & 0.05 & 0.69 & 0.68 & 0.71 & 0.72 & 0.68 & 0.70 & 0.51 & 0.58 \\ 
   &  & 0.25 & 0.69 & 0.73 & 0.72 & 0.75 & 0.70 & 0.73 & 0.56 & 0.62 \\ 
   & 10 & 0.05 & 0.67 & 0.70 & 0.56 & 0.55 & 0.67 & 0.69 & 0.52 & 0.59 \\ 
   &  & 0.25 & 0.68 & 0.74 & 0.68 & 0.44 & 0.69 & 0.72 & 0.56 & 0.62 \\ 
   \hline
500 & 2 & 0.05 & 0.64 & 0.55 & 0.71 & 0.78 & 0.64 & 0.54 & 0.53 & 0.58 \\ 
   &  & 0.25 & 0.60 & 0.66 & 0.71 & 0.78 & 0.60 & 0.62 & 0.56 & 0.61 \\ 
   & 5 & 0.05 & 0.68 & 0.68 & 0.71 & 0.75 & 0.66 & 0.69 & 0.51 & 0.59 \\ 
   &  & 0.25 & 0.69 & 0.73 & 0.73 & 0.78 & 0.69 & 0.72 & 0.56 & 0.62 \\ 
   & 10 & 0.05 & 0.66 & 0.67 & 0.63 & 0.53 & 0.67 & 0.67 & 0.51 & 0.58 \\ 
   &  & 0.25 & 0.68 & 0.73 & 0.72 & 0.45 & 0.69 & 0.70 & 0.56 & 0.62 \\ 
   \hline
\end{tabular}
}
\end{table}

Table~\ref{sim1.sesp.nonlin.uw} below contains estimated sensitivities and specificities 
of an unweighted SVM when the true model is nonlinear.
\begin{table}[h!]
\caption{Average sensitivity and specificity of unweighted SVM when true model is nonlinear.} 
\label{sim1.sesp.nonlin.uw}
\centering
\begin{tabular}{ccc|cccc}
  \hline
   & & & \multicolumn{2}{c}{Linear SVM} & \multicolumn{2}{c}{Gaussian SVM} \\$n$ & $p$ & $q$ & $se$ & $sp$ & $se$ & $sp$ \\ 
   \hline
250 & 2 & 0.05 & 0.91 & 0.16 & 0.74 & 0.65 \\ 
   &  & 0.25 & 0.88 & 0.20 & 0.77 & 0.65 \\ 
   & 5 & 0.05 & 0.58 & 0.72 & 0.64 & 0.70 \\ 
   &  & 0.25 & 0.70 & 0.64 & 0.76 & 0.61 \\ 
   & 10 & 0.05 & 0.58 & 0.71 & 0.18 & 0.85 \\ 
   &  & 0.25 & 0.69 & 0.64 & 0.81 & 0.24 \\ 
   \hline
500 & 2 & 0.05 & 0.93 & 0.14 & 0.74 & 0.69 \\ 
   &  & 0.25 & 0.94 & 0.10 & 0.76 & 0.66 \\ 
   & 5 & 0.05 & 0.60 & 0.71 & 0.68 & 0.72 \\ 
   &  & 0.25 & 0.71 & 0.67 & 0.77 & 0.66 \\ 
   & 10 & 0.05 & 0.59 & 0.70 & 0.25 & 0.82 \\ 
   &  & 0.25 & 0.70 & 0.65 & 0.85 & 0.24 \\ 
   \hline
\end{tabular}
\end{table}

Table~\ref{sim1.auc.lin} 
below contains estimated AUC's 
averaged across replications and Monte Carlo standard 
deviations of AUC's for the four methods 
when the true generative model is linear. 
\begin{table}[h!]
\caption{Average AUC when true model is linear.} 
\label{sim1.auc.lin}
\centering
\begin{tabular}{ccc|cccc}
   \hline
$n$ & $p$ & $q$ & Linear SVM & Gaussian SVM & Logistic & Semiparametric \\ 
   \hline
250 & 2 & 0.05 & 0.83 (0.05) & 0.80 (0.05) & 0.82 (0.05) & 0.80 (0.03) \\ 
   &  & 0.25 & 0.84 (0.04) & 0.82 (0.06) & 0.84 (0.05) & 0.81 (0.03) \\ 
   & 5 & 0.05 & 0.87 (0.05) & 0.75 (0.06) & 0.87 (0.05) & 0.77 (0.03) \\ 
   &  & 0.25 & 0.89 (0.04) & 0.79 (0.05) & 0.89 (0.03) & 0.78 (0.03) \\ 
   & 10 & 0.05 & 0.85 (0.05) & 0.53 (0.03) & 0.86 (0.04) & 0.77 (0.03) \\ 
   &  & 0.25 & 0.87 (0.05) & 0.54 (0.04) & 0.88 (0.04) & 0.78 (0.03) \\ 
   \hline
500 & 2 & 0.05 & 0.83 (0.04) & 0.82 (0.04) & 0.83 (0.04) & 0.80 (0.02) \\ 
   &  & 0.25 & 0.85 (0.03) & 0.83 (0.03) & 0.84 (0.03) & 0.81 (0.02) \\ 
   & 5 & 0.05 & 0.88 (0.03) & 0.79 (0.04) & 0.88 (0.03) & 0.77 (0.02) \\ 
   &  & 0.25 & 0.90 (0.03) & 0.82 (0.04) & 0.89 (0.03) & 0.78 (0.02) \\ 
   & 10 & 0.05 & 0.86 (0.03) & 0.55 (0.04) & 0.88 (0.03) & 0.77 (0.02) \\ 
   &  & 0.25 & 0.89 (0.03) & 0.59 (0.05) & 0.89 (0.03) & 0.78 (0.02) \\ 
   \hline
\end{tabular}
\end{table}
The linear SVM and logistic regression perform the best 
across $n$, $p$, and $q$. The Gaussian SVM performs poorly 
when $p = 10$ due to the presence of noise variables and 
the semiparametric ROC curve performs poorly when $p > 2$ 
because it only uses a single component of $\bX$. 

Table~\ref{sim1.sesp.lin} 
contains optimal sensitivities 
and specificities averaged across replications for the four 
methods when the true generative model is linear. 
\begin{table}[h!]
\caption{Average optimal sensitivity and specificity when true model is linear.} 
\label{sim1.sesp.lin}
\centering
\scalebox{0.9}{
\begin{tabular}{ccc|cccccccc}
  \hline
   &  &  & \multicolumn{2}{c}{Linear SVM} & \multicolumn{2}{c}{Gaussian SVM} & \multicolumn{2}{c}{Logistic} & \multicolumn{2}{c}{Semiparametric} \\$n$ & $p$ & $q$ & $se$ & $sp$ & $se$ & $sp$ & $se$ & $sp$ & $se$ & $sp$ \\ 
   \hline
250 & 2 & 0.05 & 0.78 & 0.77 & 0.76 & 0.76 & 0.78 & 0.77 & 0.73 & 0.72 \\ 
   &  & 0.25 & 0.80 & 0.78 & 0.78 & 0.77 & 0.79 & 0.78 & 0.73 & 0.73 \\ 
   & 5 & 0.05 & 0.81 & 0.82 & 0.74 & 0.71 & 0.82 & 0.82 & 0.70 & 0.70 \\ 
   &  & 0.25 & 0.83 & 0.84 & 0.76 & 0.75 & 0.83 & 0.84 & 0.71 & 0.71 \\ 
   & 10 & 0.05 & 0.81 & 0.81 & 0.50 & 0.55 & 0.83 & 0.79 & 0.70 & 0.69 \\ 
   &  & 0.25 & 0.83 & 0.82 & 0.49 & 0.59 & 0.82 & 0.83 & 0.71 & 0.71 \\ 
   \hline
500 & 2 & 0.05 & 0.77 & 0.77 & 0.76 & 0.76 & 0.78 & 0.76 & 0.73 & 0.72 \\ 
   &  & 0.25 & 0.78 & 0.78 & 0.78 & 0.77 & 0.78 & 0.77 & 0.74 & 0.73 \\ 
   & 5 & 0.05 & 0.82 & 0.81 & 0.77 & 0.72 & 0.82 & 0.81 & 0.70 & 0.70 \\ 
   &  & 0.25 & 0.83 & 0.83 & 0.79 & 0.74 & 0.82 & 0.83 & 0.71 & 0.71 \\ 
   & 10 & 0.05 & 0.80 & 0.81 & 0.45 & 0.64 & 0.82 & 0.80 & 0.70 & 0.70 \\ 
   &  & 0.25 & 0.82 & 0.82 & 0.57 & 0.59 & 0.83 & 0.81 & 0.71 & 0.71 \\ 
   \hline
\end{tabular}
}
\end{table}
When the true generative model is linear, logistic regression and 
the linear SVM outperform the other methods in terms of 
sensitivity and specificity. The Gaussian SVM performs poorly 
in the presence of noise variables. 

Table~\ref{sim1.sesp.lin.uw} 
contains estimated sensitivities and specificities 
of an unweighted SVM when the true model is linear.
\begin{table}[h!]
\caption{Average sensitivity and specificity of unweighted SVM when true model is linear.} 
\label{sim1.sesp.lin.uw}
\centering
\begin{tabular}{ccc|cccc}
  \hline
   & & & \multicolumn{2}{c}{Linear SVM} & \multicolumn{2}{c}{Gaussian SVM} \\$n$ & $p$ & $q$ & $se$ & $sp$ & $se$ & $sp$ \\ 
   \hline
250 & 2 & 0.05 & 0.60 & 0.86 & 0.57 & 0.85 \\ 
   &  & 0.25 & 0.70 & 0.81 & 0.68 & 0.79 \\ 
   & 5 & 0.05 & 0.62 & 0.89 & 0.37 & 0.88 \\ 
   &  & 0.25 & 0.74 & 0.86 & 0.54 & 0.83 \\ 
   & 10 & 0.05 & 0.62 & 0.88 & 0.00 & 1.00 \\ 
   &  & 0.25 & 0.73 & 0.85 & 0.02 & 0.99 \\ 
   \hline
500 & 2 & 0.05 & 0.63 & 0.84 & 0.61 & 0.84 \\ 
   &  & 0.25 & 0.72 & 0.80 & 0.71 & 0.79 \\ 
   & 5 & 0.05 & 0.63 & 0.90 & 0.45 & 0.87 \\ 
   &  & 0.25 & 0.74 & 0.87 & 0.61 & 0.82 \\ 
   & 10 & 0.05 & 0.62 & 0.90 & 0.00 & 1.00 \\ 
   &  & 0.25 & 0.73 & 0.86 & 0.02 & 0.99 \\ 
   \hline
\end{tabular}
\end{table}

\bibliographystyle{Chicago}
\bibliography{roc}

\end{document}